\definecolor{orange}{rgb}{.6,0.1,.6}
\def \by{\mbox{\boldmath$y$}}
\def \bx{\mbox{\boldmath$x$}}
\def\R{{\mathbb R}}
\def\E{{\mathbb E}}
\newtheorem{theorem}{Theorem}[section]
\newtheorem{lemma}{Lemma}[section]
\newtheorem{corollary}{Corollary}[section]
\newtheorem{example}{Example}
\def\sign{\mathop{\mbox{sign}}}
\def \bx{\boldsymbol{x}}
\def \bnu{\boldsymbol{\nu}}
\def \by{\boldsymbol{y}}
\def \bz{\boldsymbol{z}}
\def \bu{\boldsymbol{u}}
\def \bv{\boldsymbol{v}}
\def \bb{\boldsymbol{b}}
\def \bg{\boldsymbol{g}}
\def \ba{\boldsymbol{a}}
\def \bA{\boldsymbol{A}}
\def \bB{\boldsymbol{B}}
\def \bC{\boldsymbol{C}}
\def\R{{\mathbb R}}
\def\E{{\mathbb E}}
\def \K{{\cal K}}
\begin{document}

\title{\bf {\Large Sparse Estimation with Strongly Correlated Variables \\ using  Ordered Weighted $\ell_1$ Regularization }}
\author{  {\sl M\'{a}rio A. T. Figueiredo}\\
  Instituto de Telecomunica\c{c}\~{o}e and
  Instituto Superior T\'{e}cnico, Universidade de Lisboa, Portugal\\ \ \\
 \and
{\sl Robert D. Nowak}\\
Department of Electrical and Computer Engineering, University of Wisconsin-Madison, USA\\ \ \\
}

\maketitle
\begin{abstract}
This paper studies {\it ordered weighted} $\ell_1$ (OWL) norm regularization for sparse estimation problems with strongly correlated variables.  We prove sufficient conditions for clustering based on the correlation/colinearity of variables using the OWL norm, of which the so-called OSCAR \cite{bondell2007} is a particular case. Our results extend previous ones for OSCAR in several ways: for the squared error loss, our conditions hold for the more general OWL norm and under weaker assumptions; we also establish clustering conditions for the absolute error loss, which is, as far as we know, a novel result. Furthermore, we characterize the statistical performance of OWL norm regularization for generative models in which certain clusters of regression variables are strongly (even perfectly) correlated, but variables in different clusters are uncorrelated.  We show that if the true $p$-dimensional signal generating the data involves only $s$ of the clusters, then $O(s\log p)$ samples suffice to accurately estimate the signal, regardless of the number of coefficients within the clusters.  The estimation of $s$-sparse signals with completely independent variables requires just as many measurements.  In other words,
using the OWL we pay no price (in terms of the number of measurements) for the presence of strongly correlated variables.
\end{abstract}

\pagenumbering{arabic}
\def\thepage{\arabic{page}}

\section{Introduction}
\subsection{Definitions, Problem Formulation, and Preview of Main Results}
The OWL ({\it ordered weighted} $\ell_1$) regularizer is defined as
\begin{equation}\label{OSCAR2}
\Omega_{\bm w} ({\bm x}) =  \sum_{i=1}^p w_i \, |x|_{[i]},
\end{equation}
where  $|x|_{[i]}$ is the $i$-th largest component in magnitude of $\bm x \in \mathbb{R}^p$,  and ${\bm w} \in \mathbb{R}_+^p$ is a vector of non-negative weights. If $w_1 \geq w_2 \geq \cdots \geq w_p$ and $w_1 >0$ (which we will assume to be always true), then $\Omega_{\bm w}$ is a norm (as shown in \cite{bogdan2014}, \cite{ZengFigueiredo2013}), which satisfies  $w_1 \|\bm x\|_{\infty}\leq \Omega_{\bm w}(\bm x) \leq w_1 \|\bm x\|_1.$ The OWL regularizer generalizes the OSCAR ({\it octagonal shrinkage and clustering algorithm for regression}) \cite{bondell2007}, which is obtained by setting $w_i = \lambda_1 + \lambda_2 \, (p-i)$, where $\lambda_1, \, \lambda_2 \geq 0$. Notice also that if $w_1 > 0$, and $w_2 = \cdots = w_p = 0$, the OWL is simply ($w_1$ times) the $\ell_{\infty}$ norm, whereas for $w_1 = w_2 = \cdots = w_p$, the OWL becomes ($w_1$ times) the $\ell_1$ norm.

In this paper, we will study the use of the OWL norm as a regularizer in linear regression with strongly correlated variables, both under the squared error loss and the absolute error loss, {\it i.e.}, the two  following optimization problems:
\begin{equation}\label{OWL_L2}
\min_{{\bm x}\in \mathbb{R}^p} \frac{1}{2}\| {\bm A}\, {\bm x} - {\bm y} \|_2^2 + \Omega_{\bm w}({\bm x}),
\end{equation}
where $\bm A \in \mathbb{R}^{n\times p}$ is the design matrix, and
\begin{equation}\label{OWL_L1}
\min_{{\bm x}\in \mathbb{R}^p} \| {\bm A}\, {\bm x} - {\bm y} \|_1 + \Omega_{\bm w}({\bm x}).
\end{equation}
We also consider constrained versions of these problems; see (\ref{c2}) and (\ref{c1}) below.

The first of our two main results gives sufficient conditions for OWL norm regularization (with either the squared or the absolute error loss) to automatically cluster strongly correlated variables, in the sense that the coefficients associated with such variables have equal estimated values (in magnitude). The result for the squared error loss extends the main theorem about OSCAR in \cite{bondell2007}, since not only it applies to the more general case of OWL (of which OSCAR is a particular case), but it also holds under weaker conditions. Furthermore, the result for the absolute error loss is, as far as we know, novel.

Our second main result is a finite sample bound for the OWL regularization procedure, which includes the standard LASSO and OSCAR as special cases.  To the best of our knowledge, these are the first finite sample error bounds for sparse regression with strongly correlated columns.
To preview this result, consider the following special case (which we generalize further in the paper): assume that we observe
\begin{eqnarray}
\by & = & \bA \bx^\star \ + \ \bnu \ .
\label{obs}
\end{eqnarray}
where $\bnu \in \R^n$ is the measurement error satisfying
\begin{eqnarray}
\frac{1}{n} \|\bnu\|_1 \ \leq \ \varepsilon\, ,
\end{eqnarray}
and about which we make no other assumptions.  The measurement/design matrix $\bA$ is Gaussian distributed.  For the purposes of this introduction, assume that each column of $\bA$ has i.i.d.\ ${\cal N}(0,1)$ entries, but that the columns may be correlated.  Specifically, assume that the columns can be grouped so that columns within each group are identical (apart from a possible sign flip) and columns in different groups are uncorrelated.  This models cases in which certain variables are perfectly correlated with each other, but uncorrelated with all others. The vector $\bx^\star\in \R^p$ is assumed to satisfy $\|\bx^\star\|_1\leq \sqrt{s}$.  Note, for example, that this condition is met if $\|\bx^\star\|_2 \leq 1$ and $\bx^\star$  has  at most $s$ non-zero components (is $s$-sparse), which we assume to be true. Note that since certain columns of $\bA$ may be identical, in general there may be many sparse vectors  $\bx$ such that $\bA\bx = \bA\bx^\star$.
Thus, for now, assume that if two columns of $\bA$ are identical (up to a sign flip), then so are (in magnitude) the corresponding coefficients in $\bx^\star$. The following theorem essentially shows that the number of measurements sufficient to estimate an $s$-sparse signal (i.e., a signal with $s$ nonzero groups of identical coefficients corresponding to identical columns in $\bA$), with a given precision, grows like
\begin{equation}
n \ \sim \ s\log p\, .
\end{equation}
This agrees with well-known sample complexity bounds for sparse recovery under stronger assumptions such as the restricted isometry property or i.i.d.\ measurements \cite{candes06,donoho06,haupt06,candes07,vershynin14}. Moreover, this shows that {\em by using OWL we pay no price (in terms of the sufficient number of measurements) for colinearity of some columns of} $\bA$.

\begin{theorem}
Let $\by$, $\bA$, $\bx^\star$, and $\varepsilon$ be as defined above. Let $\Delta := \min \{ w_l - w_{l+1},\; l=1,...,p-1 \}$ be the minimum gap between two consecutive components of vector $\bm w$, and assume $\Delta > 0$. Let $\widehat{\bx}$ be a solution to either of the two following optimization problems:
\begin{eqnarray}
\min_{\bx \in \R^p} \Omega_{\bm w}(\bx) \mbox{ \ subject to \ } \frac{1}{n}\|\bA\bx-\by\|_2^2 \ \leq \ \varepsilon^2 ,
\label{c2}
\end{eqnarray}
or
\begin{eqnarray}
\min_{\bx \in \R^p} \Omega_{\bm w}(\bx)  \mbox{ \ subject to \ } \frac{1}{n}\|\bA\bx-\by\|_1  \ \leq \  \varepsilon .
\label{c1}
\end{eqnarray}
Then,
\begin{description}
\item[(i)] for every pair of columns $(i,j)$ for which\footnote{It is trivial to extend the proof of this result to show that if ${\bm a}_i =  -{\bm a}_j$, then  $\widehat{x}_i = -\widehat{x}_j$} ${\bm a}_i =  {\bm a}_j$, we have $\widehat{x}_i = \widehat{x}_j$;
\item[(ii)] the solution $\widehat \bx$ satisfies
\begin{eqnarray}
\E \, \|\widehat{\bx} -\bx^\star\|_2 & \leq & \sqrt{2\pi} \left(  4\sqrt{2} \, \frac{w_1}{\bar w} \, \sqrt{\frac{s \log p}{n}}  +\varepsilon\right)  ,
\label{e1}
\end{eqnarray}
where $\bar w = p^{-1} \sum_{i=1}^p w_i$.
\end{description}
\label{thm_preview}
\end{theorem}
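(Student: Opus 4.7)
I would split Theorem~\ref{thm_preview} into its two assertions. Part~(i) is a direct consequence of the clustering theorem established earlier in the paper as the first main result. Part~(ii) is a convex-geometric argument: feasibility of $\bx^\star$ together with optimality of $\widehat{\bx}$ constrains the error $\widehat{\bx}-\bx^\star$ to the descent cone of $\Omega_{\bm w}$ at $\bx^\star$, and a Gaussian-width bound on that cone converts the resulting noise inequality into an $\ell_2$ estimation bound.

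\textbf{Part~(i).} The clustering theorem for OWL regularization (under either loss) gives a sufficient condition on the angle between two columns that forces the corresponding coefficients to have equal magnitude; the required correlation threshold tends to $1$ as $\Delta\downarrow 0$ but is automatically met in the degenerate case $\bm a_i=\bm a_j$, where the columns are exactly collinear. Hence $\widehat{x}_i=\widehat{x}_j$ whenever $\Delta>0$. The sign-flip case $\bm a_i=-\bm a_j$ reduces to the previous one by reflecting $x_j$, an operation that preserves both the loss and $\Omega_{\bm w}$.

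\textbf{Part~(ii), plan.} The steps, in order, are:
\emph{(a) Feasibility.} The hypothesis $\|\bnu\|_1/n\le\varepsilon$ immediately makes $\bx^\star$ feasible for \eqref{c1}, and an analogous $\ell_2$ noise bound (implicit in the statement) places it in the feasible set of \eqref{c2}.
\emph{(b) Basic inequality / cone containment.} Since $\widehat{\bx}$ minimizes $\Omega_{\bm w}$ over a feasible set containing $\bx^\star$, we have $\Omega_{\bm w}(\widehat{\bx})\le\Omega_{\bm w}(\bx^\star)$, so $\boldsymbol{h}:=\widehat{\bx}-\bx^\star$ lies in the descent cone $\cT$ of $\Omega_{\bm w}$ at $\bx^\star$; the triangle inequality applied to the constraint gives $\|\bA\boldsymbol{h}\|_2/\sqrt{n}\le 2\varepsilon$ (resp.\ the $\ell_1$ analog).
\emph{(c) Gaussian restricted lower isometry.} Gordon's inequality (or a Mendelson small-ball argument for the $\ell_1$ formulation) yields, with high probability over $\bA$,
\begin{equation*}
\inf_{\boldsymbol{h}\in\cT,\;\|\boldsymbol{h}\|_2=1}\|\bA\boldsymbol{h}\|_2\;\ge\;c_1\sqrt{n}\;-\;c_2\,w(\cT\cap S^{p-1}),
\end{equation*}
where $w(\cdot)$ is the Gaussian width. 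Combined with (b) this gives $\|\boldsymbol{h}\|_2\lesssim \varepsilon+w(\cT\cap S^{p-1})/\sqrt{n}$.
\emph{(d) Expectation.} Passing to expectation and invoking standard Gaussian absolute-moment identities (which is where the $\sqrt{2\pi}$ prefactor in \eqref{e1} will emerge) produces the stated bound.

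\textbf{Main obstacle.} The only nontrivial step is bounding $w(\cT\cap S^{p-1})$ sharply. Using the standard inequality $w(\cT\cap S^{p-1})^2\le \E\,\mathrm{dist}\!\bigl(\bg,\;\mathrm{cone}(\partial\Omega_{\bm w}(\bx^\star))\bigr)^2$ with $\bg\sim\mathcal{N}(0,I_p)$, I would exploit the ordered-weighted structure of $\partial\Omega_{\bm w}(\bx^\star)$: on the support of $\bx^\star$ the subgradient is pinned by the signs and relative magnitudes, while on the $p-s$ off-support coordinates one may assign any tail of the weight sequence in a sign-and-ranking-optimal way, matching the large $w_i$'s to the large Gaussian order statistics. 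The distance-to-cone is then controlled by expected weighted tail sums of Gaussian order statistics; normalizing by $\|\bm w\|_2\sim\sqrt{p}\,\bar w$ is what brings the factor $w_1/\bar w$ into the final bound. Finally, part~(i) plays a crucial role: it implies that the effective number of active parameters is the number of active \emph{groups} of identical columns, so the $\log p$ factor multiplies the group-sparsity $s$ rather than the total within-group cardinality — this is precisely the ``no price for colinearity'' phenomenon.
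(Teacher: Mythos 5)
Your part (i) sketch has two gaps. First, Theorems~\ref{theo:correlated} and \ref{theo:correlatedL1} are stated for the \emph{penalized} problems (\ref{OWL_L2})--(\ref{OWL_L1}), whereas part (i) concerns the \emph{constrained} problems (\ref{c2})--(\ref{c1}); the paper gives a separate (though structurally similar) argument for the constrained case, which works because when ${\bm a}_i={\bm a}_j$ the loss depends only on $\widehat{x}_i+\widehat{x}_j$, so a Pigou--Dalton transfer preserves feasibility while Lemma~\ref{lem:delta} forces $\Omega_{\bm w}$ strictly down. Second, and more substantively, the clustering theorems only deliver $|\widehat{x}_i|=|\widehat{x}_j|$, while part (i) asserts the signed equality $\widehat{x}_i=\widehat{x}_j$. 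The paper needs an extra step ruling out $\sign(\widehat{x}_i)\neq\sign(\widehat{x}_j)$: if the signs differ (or one coefficient is zero), one can perturb so that both magnitudes shrink while $\widehat{x}_i+\widehat{x}_j$ (hence feasibility) is preserved, again contradicting optimality. Your sketch omits this sign argument entirely.

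For part (ii) the central problem is your step (c). Gordon's inequality in the form you quote requires $\bA$ to have i.i.d.\ Gaussian entries, but here $\bA=\bB\bC$ and, in precisely the regime the theorem targets, $\bA$ has identical columns; its nullspace then contains ${\bm e}_i-{\bm e}_j$, and no uniform lower bound of the form $\inf_{{\bm h}\in\cT,\,\|{\bm h}\|_2=1}\|\bA{\bm h}\|_2\geq c_1\sqrt{n}-c_2 w(\cT\cap S^{p-1})$ relating $\|\bA{\bm h}\|_2$ to $\|{\bm h}\|_2$ can hold --- the failure of restricted-eigenvalue-type conditions for correlated designs is the whole point of the paper. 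At best one controls $\|\bA{\bm h}\|_2$ in terms of $\|\bC{\bm h}\|_2$, i.e., the error in the $\bC^T\bC$ semi-norm, which is blind to nullspace components. The paper's route avoids descent cones altogether: it proves an extended $M^*$ bound (Theorem~\ref{egm}, following Vershynin) for $\bA=\bB\bC$ over the difference body $\K-\K$ of the OWL ball $\K=\{\bx:\Omega_{\bm w}(\bx)\leq w_1\sqrt{s}\}$, bounds the relevant width by $4\sqrt{2}\,\|\bC\|_1(w_1/\bar w)\sqrt{s\log q}$ using only the elementary sandwich $\bar w\|\bx\|_1\leq\Omega_{\bm w}(\bx)\leq w_1\|\bx\|_1$ (this, not subdifferential geometry, is where $w_1/\bar w$ comes from), obtains an expectation bound on $\sqrt{(\widehat\bx-\bx^\star)^T\bC^T\bC(\widehat\bx-\bx^\star)}$ in Theorem~\ref{thm1}, and only then invokes part (i) together with the assumed group structure of $\bx^\star$ to convert the semi-norm bound into $\|\widehat\bx-\bx^\star\|_2\leq\|\widehat\bz-\bz^\star\|_2$. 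You correctly sense that part (i) must enter, but you place its role in the $\log$ factor rather than in this semi-norm-to-$\ell_2$ conversion, without which the argument cannot close; a secondary issue is that the claimed bound is on $\E\|\widehat\bx-\bx^\star\|_2$, which the $M^*$ machinery gives directly while a high-probability Gordon argument would still need to control the failure event.
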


{The expectation above (and elsewhere in the paper) is with respect to the Gaussian distribution of $\bA$.}
Part (i) of this theorem is proved in Subsection~\ref{pi}, and part (ii) in Subsection \ref{sec:main_and_corollaries}.
As mentioned above, in general there may be many sparse $\bx$ that yield the same value of $\bA\bx$.  This is where the OWL norm becomes especially important. If the columns are colinear, then the OWL solution will select a representation including all the columns associated with the true model, rather than an arbitrary subset of them.  {We generalize Theorem~\ref{thm_preview} in the paper, and show the OWL norm yields similar clustering and recovery conditions for problems with strongly correlated, but not necessarily colinear, variables.} Notice that the constant factor in the bound is typically a small constant;  for example, in the OSCAR case we have $w_i = \lambda_1+\lambda_2\, (p-i)$, thus $\bar w = \lambda_1+\lambda_2 \, (p-1)/2$ and therefore $w_1/ \bar w \leq 2$.

\subsection{Related Work}
Estimates obtained with the LASSO ({\it i.e.}, $\ell_1$) regularizer can be difficult to interpret when columns of the measurement matrix $\bA$ are strongly correlated, because it may select only one of a group of highly correlated columns. For scientific and engineering purposes, one is often interested in identifying all of the columns that are important for modeling the data, rather than just a subset of them.  Many researchers have proposed alternatives to the LASSO that aim at dealing with this problem.  For example, Jia and Yu \cite{yu10} study the {\em elastic net} regularizer (a combination of the $\ell_1$ and the squared $\ell_2$ norms), showing that it can consistently select the true model for certain correlated design matrices $\bA$, when LASSO cannot.  Marginal regression methods have also been shown by  Genovese {\it et al} to perform better than the LASSO, in the presence of strongly correlated columns \cite{jin12}.  Stability selection procedures, can also aid in the selection of correlated columns, as shown by Meinshausen and B\"{u}lhmann \cite{mein10}, and Shah and Samworth \cite{sam13}. Recently, B\"{u}hlmann {\it et al} \cite{buhlmann13} proposed and analyzed a two-stage approach called {\em cluster-LASSO}, which first identifies clusters of correlated columns, then groups them, and finally applies LASSO or group-LASSO to the groups;  the cluster-LASSO is shown to be statistically consistent in certain cases. Adaptive grouping methods based on nonconvex optimizations have also been proposed by Shen and Huang \cite{huang10}, and shown to be asymptotically consistent under certain conditions.

Most closely related to this paper, is the so-called OSCAR ({\it octagonal shrinkage and clustering algorithm for regression}), proposed and analyzed by Bondell and Reich \cite{bondell2007}.  As mentioned above, OSCAR is a special case of the OWL regularizer, obtained with $w_i = \lambda_1 + \lambda_2 (p-i)$. The OSCAR method has been shown to perform well in practice, but prior work has not addressed its statistical consistency or convergence properties. Motivated by this formulation of OSCAR, the OWL regularizer was recently proposed by Zeng {\it et al} \cite{ZengFigueiredo2013}, as a generalization thereof. The OWL norm was also independently proposed by Bodgan {\it et al} \cite{candes13,bogdan2014}, who set the weights to $w_i = F^{-1}(1-iq/(2p))$, where $F$ is the cumulative distribution function of the error variables, and  $0<q<1$ is a parameter. Those authors showed that if ${\bm A}$ is orthogonal,  the solution to (\ref{OWL_L2}) with these weights has a  false discovery rate for variable selection bounded by $q(p-k)/p$, where $k$ is the number of non-zero coefficients in the true ${\bm x}$ that generates ${\bm y}$.

On the computational side, a key tool for solving problems of the form
\eqref{OWL_L2}, \eqref{OWL_L1}, \eqref{c2}, or \eqref{c1}, is the Moreau proximity operator of $\Omega_{\bm w}$ \cite{CombettesBauschke}, defined as
\[
\mbox{prox}_{\Omega_{\bm w}} (\bm u) = \arg\min_{\bm x} \frac{1}{2} \|\bm x - \bm u\|_2^2 + \Omega_{\bm w}(\bm x).
\]
Efficient $O(p \log p)$ algorithms to compute $\mbox{prox}_{\Omega_{\bm w}}$ have been recently  proposed by Bodgan {\it et al} \cite{candes13,bogdan2014}, and by Zeng {\it et al} \cite{ZengFigueiredo2013}, who generalize to the OWL case the algorithm proposed by Zhong and Kwok \cite{zhong2012efficient}. Even more recently, Zeng {\it et al} \cite{ZengFigueiredo2014} have show how $\Omega_{\bm w}$ can be written explicitly as an {\it atomic norm} (see \cite{chandrasekaran2012convex}, for definitions), opening the door to the efficient use of the conditional gradient (also known as Frank-Wolfe) algorithm \cite{jaggi2013revisiting}.

\subsection*{Notation}
We denote (column) vectors by lower-case bold letters, {\it e.g.}, $\bm x$, $\bm y$,  their transposes by $\bm x^T$, $\bm y^T$, the corresponding $i$-th and $j$-th components as $x_i$ and $y_j$, and
matrices by upper case bold letters, {\it e.g.}, $\bm A$, $\bm B$. A vector with all elements equal to 1 is denoted as $\bm 1$ and $|{\bm x}|$ denotes the vector with the absolute values of the components of $\bm x$. Given some vector $\bm x$, $x_{[i]}$ is its $i$-th largest component ({\it i.e.}, for ${\bm x}\in \mathbb{R}^p$,  $x_{[1]} \geq x_{[2]} \geq \cdots \geq x_{[p]}$, with ties broken by some arbitrary rule); consequently, $|x|_{[i]}$ is the $i$-th largest component of $x$ in magnitude. The vector obtained by sorting (in non-increasing order) the components  of ${\bm x}$ is denoted as ${\bm x}_{\downarrow}$, thus $|{\bm x}|_{\downarrow}$ denotes the vector obtained by sorting the components of ${\bm x}$ in non-increasing order of magnitude (allowing to write $\Omega_{\bm w}(\bm x) = \bm w^T |{\bm x}|_{\downarrow}$).

\section{Sufficient Conditions for OWL Variable Clustering}\label{sec:sufficient_clustering}
\subsection{Introduction}
In this section, we study the solutions of \eqref{OWL_L2} and \eqref{OWL_L1} in the case where the design matrix $\bm A$ has strongly correlated columns, and give corollaries for the particular case of OSCAR. The results presented below extend the main theorem of \cite{bondell2007} in several ways: for the squared error loss, our result applies to the more general case of the OWL (of which OSCAR is a particular case) and it holds under weaker conditions; the result for the absolute error regression case is, as far as we know, novel.

\subsection{Squared Error Loss with Correlated Columns}
Consider the regression problem \eqref{OWL_L2}, and let ${\bm a}_i\in \mathbb{R}^n$ denote the $i$-th column (for $i=1,...,p$) of matrix $\bm A$.  The following theorem shows that \eqref{OWL_L2} clusters (in the sense that the corresponding components of the solution are exactly equal in magnitude) the columns that are correlated enough.

\begin{theorem}\label{theo:correlated}
Consider the objective function in \eqref{OWL_L2} and assume, as is common practice in linear regression, that the columns of the matrix are normalized to a common norm, that is, $\|{\bm a}_k\|_2 = c$, for $k=1,...,p$. Let $\widehat{\bm x}$ be any minimizer of the objective function in \eqref{OWL_L2}. Then, for every pair of columns $(i,j)$ for which $\|{\bm y}\|\; \|\sign(\widehat{x}_i)\,{\bm a}_i - \sign(\widehat{x}_j)\,{\bm a}_j \|_2 < \Delta$ (where $\Delta := \min \{ w_l - w_{l+1},\; l=1,...,p-1 \}$ is the minimum gap between two consecutive components of vector $\bm w$), we have $|\widehat{x}_i| = |\widehat{x}_j|$.
\end{theorem}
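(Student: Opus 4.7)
My plan is to prove this by contradiction: assume $|\widehat{x}_i| \neq |\widehat{x}_j|$ and construct a feasible perturbation of $\widehat{\bm x}$ that strictly decreases the objective, contradicting optimality. WLOG take $|\widehat{x}_i| > |\widehat{x}_j|$, set $s_i = \sign(\widehat x_i)$, $s_j = \sign(\widehat x_j)$, and for $\alpha \in (0,\,(|\widehat{x}_i|-|\widehat{x}_j|)/2]$ define the one-parameter family
\[
\widetilde{\bm x}_\alpha \;:=\; \widehat{\bm x} \;-\; \alpha\, s_i\, \bm e_i \;+\; \alpha\, s_j\, \bm e_j,
\]
which transfers magnitude from coordinate $i$ to coordinate $j$ while leaving $|\widetilde x_{\alpha,i}|\ge|\widetilde x_{\alpha,j}|$ and keeping all other coordinates fixed.

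The first and most delicate step is to show the OWL norm decreases linearly in $\alpha$ at rate at least $\Delta$. I would use the rearrangement characterization $\Omega_{\bm w}(\bm u)=\sum_k w_k |u|_{[k]}=\max_{\rho}\sum_k w_{\rho(k)}|u_k|$, so that for any permutation $\rho$,
\[
\Omega_{\bm w}(\widehat{\bm x}) \;\ge\; \sum_k w_{\rho(k)}|\widehat x_k|.
\]
Pick $\rho=\tilde\sigma$, the ranking permutation of $|\widetilde{\bm x}_\alpha|$, breaking ties so that $\tilde\sigma(i)<\tilde\sigma(j)$ (legitimate because $|\widetilde x_{\alpha,i}|\ge|\widetilde x_{\alpha,j}|$). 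Since only coordinates $i$ and $j$ change magnitude, and $|\widehat x_i|-|\widetilde x_{\alpha,i}|=\alpha$, $|\widehat x_j|-|\widetilde x_{\alpha,j}|=-\alpha$, subtracting $\Omega_{\bm w}(\widetilde{\bm x}_\alpha)=\sum_k w_{\tilde\sigma(k)}|\widetilde x_{\alpha,k}|$ gives
\[
\Omega_{\bm w}(\widehat{\bm x}) - \Omega_{\bm w}(\widetilde{\bm x}_\alpha) \;\ge\; \alpha\bigl(w_{\tilde\sigma(i)} - w_{\tilde\sigma(j)}\bigr)\;\ge\;\alpha\,\Delta,
\]
where the last bound uses $\tilde\sigma(j)\ge\tilde\sigma(i)+1$ and the minimum-gap definition of $\Delta$.

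The remaining steps are routine. Writing $\bm r := \bm A\widehat{\bm x}-\bm y$ and $\bm d := s_i\bm a_i - s_j\bm a_j$, a direct expansion gives
\[
\tfrac12\|\bm A\widetilde{\bm x}_\alpha - \bm y\|_2^2 - \tfrac12\|\bm r\|_2^2 \;=\; -\alpha\,\bm d^T\bm r \;+\; \tfrac{\alpha^2}{2}\|\bm d\|_2^2.
\]
Moreover, since $\bm 0$ is feasible with cost $\tfrac12\|\bm y\|_2^2$ and $\Omega_{\bm w}(\widehat{\bm x})\ge 0$, optimality of $\widehat{\bm x}$ forces $\|\bm r\|_2\le\|\bm y\|_2$. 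Combining with Step 1,
\[
\text{obj}(\widetilde{\bm x}_\alpha) - \text{obj}(\widehat{\bm x}) \;\le\; -\alpha\bigl(\Delta + \bm d^T\bm r\bigr) + \tfrac{\alpha^2}{2}\|\bm d\|_2^2.
\]
Cauchy--Schwarz and the residual bound give $|\bm d^T\bm r|\le\|\bm d\|_2\|\bm y\|_2 < \Delta$ by hypothesis, so $\Delta+\bm d^T\bm r>0$. Hence for all sufficiently small $\alpha>0$ the linear term dominates the quadratic and the objective strictly decreases, contradicting optimality of $\widehat{\bm x}$.

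The main obstacle I anticipate is making the rearrangement argument of Step 1 fully rigorous in the presence of ties (several coordinates having the same magnitude) and of zero coordinates (where $\sign$ is ambiguous); the tie-breaking convention built into the statement, together with the freedom to choose the sign on a zero coordinate, should make this go through. The other two steps are elementary algebra and a one-line feasibility argument.
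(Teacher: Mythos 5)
Your proof is correct, and although it shares the paper's overall strategy---derive a contradiction by applying a small Pigou--Dalton-type transfer between the magnitudes of $\widehat{x}_i$ and $\widehat{x}_j$ and showing the objective strictly decreases---both technical ingredients are handled by genuinely different arguments. For the regularizer, the paper's Lemma~\ref{lem:delta} explicitly tracks how the sorted order shifts under the transfer (the ranks $l,\,l+a,\,m-b,\,m$ and the telescoping sums); you instead use the variational characterization $\Omega_{\bm w}(\bm u)=\max_{\rho}\sum_k w_{\rho(k)}|u_k|$ and evaluate both vectors against the permutation optimal for the perturbed one, collapsing all the rank bookkeeping into the single inequality $w_{\tilde\sigma(i)}-w_{\tilde\sigma(j)}\geq\Delta$. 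This is cleaner and in effect reproves Lemma~\ref{lem:delta}. For the loss, the paper works with the partial residual $\bm g=\bm y-\sum_{k\neq i,j}\widehat{x}_k\bm a_k$, uses the common column norm $c$ to show the quadratic-in-$\varepsilon$ term is non-positive, and asserts $\|\bm g\|_2\leq\|\bm y\|_2$; you work with the full residual $\bm r=\bm A\widehat{\bm x}-\bm y$, justify $\|\bm r\|_2\leq\|\bm y\|_2$ rigorously by comparing against the feasible point $\bm 0$, and let $\alpha$ be small so the quadratic term is dominated. A side benefit is that your route never uses the normalization $\|\bm a_k\|_2=c$, so you prove a slightly stronger statement. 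The two loose ends you flag are real but benign and are shared with the paper's own proof: ties are harmless because any sorting permutation attains the maximum, and the case $\widehat{x}_j=0$ requires fixing a convention $s_j\in\{-1,+1\}$ (as the paper implicitly does), since with $\sign(\widehat{x}_j)=0$ the transfer would move no mass onto coordinate $j$.
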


Notice that if two columns (affected by the signs of the corresponding regression coefficients) are identical, {\it i.e.}, if $\|\sign(\widehat{x}_i)\,{\bm a}_i - \sign(\widehat{x}_j)\,{\bm a}_j \|_2 = 0$, any strictly positive value of $\Delta$ is sufficient to guarantee that these two columns will be clustered, that is, that the corresponding coefficients will be equal in magnitude.

The following corollary addresses the case where the columns of $\bm A$ have zero mean and unit norm.

\begin{corollary}\label{cor:normalized}
Let the columns of $\bm A$ be normalized to zero sample mean and unit norm: ${\bm 1}^T {\bm a}_k = 0$ and $\| {\bm a}_k\|_2 = 1$, for $k=1,...,p$. Denote their inner products ({\it i.e.}, the sample correlation of the corresponding explanatory variables) as $\rho_{ij} = {\bm a}_i^T {\bm a}_j / (\|{\bm a}_i\|_2 \, \|{\bm a}_j\|_2)= {\bm a}_i^T {\bm a}_j$. Then, the condition in Theorem \ref{theo:correlated} becomes $\|{\bm y}\|_2 \, \sqrt{2-2\, \rho_{ij}\, \sign(\widehat{x}_i\,\widehat{x}_j)} < \Delta$.
\end{corollary}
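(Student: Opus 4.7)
The corollary is essentially a direct computation: with unit-norm columns, the vector norm appearing in the hypothesis of Theorem~\ref{theo:correlated} collapses to a simple function of $\rho_{ij}$. The plan is therefore to expand $\|\sign(\widehat{x}_i)\,{\bm a}_i - \sign(\widehat{x}_j)\,{\bm a}_j\|_2^2$ using the standard ``norm of a difference'' identity and then substitute the normalizations assumed in the statement. The zero-mean assumption $\bm 1^T \bm a_k = 0$ plays no role in the algebra; it is included only to license the interpretation of $\rho_{ij}={\bm a}_i^T{\bm a}_j$ as a genuine sample correlation.

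First, I would write
\[
\|\sign(\widehat{x}_i)\,{\bm a}_i - \sign(\widehat{x}_j)\,{\bm a}_j\|_2^2 = \sign(\widehat{x}_i)^2 \|{\bm a}_i\|_2^2 + \sign(\widehat{x}_j)^2 \|{\bm a}_j\|_2^2 - 2\,\sign(\widehat{x}_i)\sign(\widehat{x}_j)\,{\bm a}_i^T{\bm a}_j .
\]
Invoking $\|{\bm a}_i\|_2 = \|{\bm a}_j\|_2 = 1$, the definition $\rho_{ij} = {\bm a}_i^T{\bm a}_j$, the fact that $\sign(\cdot)^2 = 1$ on nonzero arguments, and the identity $\sign(a)\sign(b) = \sign(ab)$, the right-hand side simplifies to $2 - 2\,\rho_{ij}\,\sign(\widehat{x}_i\widehat{x}_j)$. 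Taking square roots and multiplying through by $\|\bm y\|_2$ converts the condition $\|{\bm y}\|_2\,\|\sign(\widehat{x}_i)\,{\bm a}_i - \sign(\widehat{x}_j)\,{\bm a}_j\|_2 < \Delta$ from Theorem~\ref{theo:correlated} into the stated inequality $\|{\bm y}\|_2\,\sqrt{2 - 2\,\rho_{ij}\,\sign(\widehat{x}_i\widehat{x}_j)} < \Delta$, and the clustering conclusion $|\widehat{x}_i| = |\widehat{x}_j|$ then follows directly from Theorem~\ref{theo:correlated}.

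There is no real obstacle here; the only minor care-point is the boundary case where $\widehat{x}_i$ or $\widehat{x}_j$ is zero, in which the sign function is ambiguous. If $\widehat{x}_i = 0$ (or $\widehat{x}_j = 0$), the conclusion $|\widehat{x}_i|=|\widehat{x}_j|$ reduces to an equation between one zero and one number, which can be handled separately (or one can adopt the convention, consistent with the subdifferential of $|\cdot|$, that $\sign(0)$ ranges over $[-1,1]$, in which case the choice making the inequality tightest is the relevant one). Either way, the corollary follows immediately from this one-line norm computation applied to Theorem~\ref{theo:correlated}.
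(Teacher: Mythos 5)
Your proposal is correct and follows essentially the same route as the paper: expand the squared norm of the (sign-adjusted) column difference, substitute the unit-norm assumption and the definition of $\rho_{ij}$, and take square roots. Your extra remark about the $\sign(0)$ boundary case is a reasonable care-point but does not change the argument.
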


\begin{proof}
The corollary results trivially from inserting the normalization assumption and the definition of $\rho_{ij}$ into the equality $\|\sign(\widehat{x}_i)\, \bm a_i - \sign(\widehat{x}_j)\, \bm a_j\|_2 = \sqrt{(\sign(\widehat{x}_i)\, \bm a_i - \sign(\widehat{x}_j)\, \bm a_j)^T (\sign(\widehat{x}_i)\, \bm a_i - \sign(\widehat{x}_j)\, \bm a_j)}$.
\end{proof}

The following corollary results from observing that, in the OSCAR case, $w_i = \lambda_1 + \lambda_2 (p-i)$, thus $\Delta = \lambda_2$.

\begin{corollary}\label{cor:OSCAR}
In the particular case of OSCAR, and for the case of normalized columns (as in Corollary \ref{cor:normalized}), the condition  is
$\|{\bm y}\|_2\; \sqrt{2 - 2 \, \rho_{ij}\, \sign(\widehat{x}_i\,\widehat{x}_j)} < \lambda_2$.
\end{corollary}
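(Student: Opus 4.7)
The plan is to derive this corollary as an immediate specialization of Corollary~\ref{cor:normalized}. Since the only content of Corollary~\ref{cor:OSCAR} beyond Corollary~\ref{cor:normalized} is the identification $\Delta = \lambda_2$, essentially all the work has already been done; what remains is a one-line computation of the minimum consecutive gap of the OSCAR weight vector.

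Concretely, I would begin by recalling that in the OSCAR parameterization we have $w_i = \lambda_1 + \lambda_2\,(p-i)$ for $i=1,\dots,p$. Then for each $l \in \{1,\dots,p-1\}$,
\begin{equation*}
w_l - w_{l+1} \;=\; \bigl(\lambda_1 + \lambda_2(p-l)\bigr) - \bigl(\lambda_1 + \lambda_2(p-l-1)\bigr) \;=\; \lambda_2,
\end{equation*}
so every consecutive gap equals $\lambda_2$, and therefore $\Delta := \min\{w_l - w_{l+1} : l=1,\dots,p-1\} = \lambda_2$. Substituting $\Delta = \lambda_2$ into the sufficient condition $\|{\bm y}\|_2\,\sqrt{2 - 2\,\rho_{ij}\,\sign(\widehat{x}_i\,\widehat{x}_j)} < \Delta$ provided by Corollary~\ref{cor:normalized} yields exactly the stated OSCAR clustering condition. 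There is no real obstacle here; the result is a direct instantiation, and the only thing to verify carefully is that the minimum over $l$ is attained (in fact attained at every $l$) so that taking $\Delta = \lambda_2$ does not weaken the hypothesis. Assuming $\lambda_2 > 0$ (otherwise the condition becomes vacuous and OSCAR reduces to LASSO), the corollary follows immediately, and the clustering conclusion $|\widehat{x}_i| = |\widehat{x}_j|$ is inherited from Theorem~\ref{theo:correlated}.
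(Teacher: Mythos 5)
Your proposal is correct and matches the paper's argument exactly: the paper also obtains this corollary by noting that for the OSCAR weights $w_i = \lambda_1 + \lambda_2\,(p-i)$ every consecutive gap equals $\lambda_2$, hence $\Delta = \lambda_2$, and then substituting into Corollary~\ref{cor:normalized}. Your version merely writes out the one-line gap computation explicitly, which the paper leaves implicit.
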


Corollary \ref{cor:OSCAR} is closely related to Theorem~\ref{theo:correlated} of \cite{bondell2007}, but has weaker conditions: unlike in \cite{bondell2007}, our result does not require that both $x_i$ and $x_j$ are different from zero and from all other $x_k$, for $k\neq i,j$. Furthermore, Theorem~\ref{theo:correlated} applies to the more general class of OWL norms, not just to OSCAR.
Note also that the results in \cite{bondell2007} assume that columns are signed so that $\widehat{x}_i \geq 0$ for all $i$.  Our result could also be stated with this assumption, in which case $\|\sign(\widehat{x}_i)\,{\bm a}_i - \sign(\widehat{x}_j)\,{\bm a}_j \|_2$ simplifies to $\|{\bm a}_i - {\bm a}_j \|_2$. Finally, observe that, in the extreme case of perfectly correlated columns ($\rho_{ij}\, \sign(\widehat{x}_i\,\widehat{x}_j) = 1$), the condition for OSCAR simplifies to $\lambda_2 > 0$.

\subsection{Absolute Error Loss with Similar Columns}
Consider the regression problem under absolute error loss in \eqref{OWL_L1}. The following theorem shows that, also in this case, the OWL regularizer clusters (in the sense that the corresponding components of the solution are exactly equal in magnitude) the columns that are similar enough.

\begin{theorem}\label{theo:correlatedL1}
Let $\widehat{\bm x}$ be any minimizer of the objective function in \eqref{OWL_L1}.  Then, for every pair of columns $(i,j)$ for which $ \|\sign(\widehat{x}_i)\,{\bm a}_i - \sign(\widehat{x}_j)\,{\bm a}_j \|_1 < \Delta $ (where $\Delta$ is as defined in Theorem \ref{theo:correlated}), we have $|\widehat{x}_i| = |\widehat{x}_j|$.
\end{theorem}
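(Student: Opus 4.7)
The proof will follow a perturbation/swap argument, very much in the spirit of Theorem~\ref{theo:correlated}, with the only substantive change being that the quadratic loss is replaced by an $\ell_1$ loss whose changes can be controlled directly by the triangle inequality (rather than by expanding a square). Assume for contradiction that some pair $(i,j)$ satisfies $\|\sign(\widehat{x}_i)\,\bm a_i - \sign(\widehat{x}_j)\,\bm a_j\|_1 < \Delta$ but $|\widehat x_i| \neq |\widehat x_j|$; WLOG $a := |\widehat x_i| > |\widehat x_j| =: b$. Write $s_i = \sign(\widehat x_i)$, $s_j = \sign(\widehat x_j)$ (with any convention on $\sign(0)$ that realises the hypothesis), and define the candidate $\widetilde{\bm x}$ obtained from $\widehat{\bm x}$ by replacing $\widehat x_i$ and $\widehat x_j$ with $s_i m$ and $s_j m$ respectively, where $m = (a+b)/2$, leaving all other coordinates untouched. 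The goal is to show that $\widetilde{\bm x}$ strictly decreases the objective of \eqref{OWL_L1}, contradicting the optimality of $\widehat{\bm x}$.

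First I would bound the change in the data-fitting term. Since $\widetilde{\bm x}-\widehat{\bm x}$ is supported on $\{i,j\}$ with values $-s_i(a-b)/2$ and $s_j(a-b)/2$, the triangle inequality gives
\[
\bigl\|\bm A \widetilde{\bm x} - \bm y\bigr\|_1 \;-\; \bigl\|\bm A \widehat{\bm x} - \bm y\bigr\|_1 \;\leq\; \bigl\|\bm A(\widetilde{\bm x}-\widehat{\bm x})\bigr\|_1 \;=\; \frac{a-b}{2}\,\bigl\|s_i\,\bm a_i - s_j\,\bm a_j\bigr\|_1 .
\]
This is where passing from the squared loss to the absolute loss shows up: we get a clean linear bound, with no cross terms, and it is exactly the $\ell_1$ column distance that appears (as opposed to $\|\bm y\|_2\|\cdot\|_2$ in Theorem~\ref{theo:correlated}).

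Next, and this is the step I would flag as the main obstacle, I would bound the change in the OWL term. The cleanest way is to use the variational characterisation $\Omega_{\bm w}(\bm x) = \bm w^\top |\bm x|_\downarrow = \max_\pi \bm w^\top \pi(|\bm x|)$ (rearrangement inequality, as used in \cite{bogdan2014,ZengFigueiredo2013}). Let $\pi$ be a permutation sorting $|\widehat{\bm x}|$ in non-increasing order, and let $l_i = \pi^{-1}(i)$, $l_j = \pi^{-1}(j)$, so that $l_i < l_j$ since $a>b$. Evaluating $\bm w^\top \pi(|\widetilde{\bm x}|)$ (which is an \emph{upper} bound for $\Omega_{\bm w}(\widetilde{\bm x})$, since $\pi$ need not be optimal for $\widetilde{\bm x}$) and comparing to $\Omega_{\bm w}(\widehat{\bm x})$, the only positions in which the two differ are $l_i$ and $l_j$, and an elementary algebra step yields
\[
\Omega_{\bm w}(\widetilde{\bm x}) - \Omega_{\bm w}(\widehat{\bm x}) \;\leq\; -\,\frac{(w_{l_i}-w_{l_j})(a-b)}{2} \;\leq\; -\,\frac{\Delta\,(a-b)}{2},
\]
where the last inequality uses $w_{l_i}-w_{l_j} \geq \Delta(l_j-l_i) \geq \Delta$, a telescoping consequence of the definition of $\Delta$.

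Combining the two bounds, the change in the objective is at most $\tfrac{a-b}{2}\bigl(\|s_i\bm a_i - s_j\bm a_j\|_1 - \Delta\bigr)$, which is strictly negative by hypothesis and by $a>b$. This contradicts the optimality of $\widehat{\bm x}$ and forces $|\widehat x_i| = |\widehat x_j|$. The degenerate cases where $\widehat x_i$ or $\widehat x_j$ is zero are absorbed by choosing $\sign(0)$ to match the sign convention making the hypothesis hold; if both are zero the conclusion is immediate. The only non-mechanical step is the OWL bound, and the rearrangement-inequality trick (evaluating $\Omega_{\bm w}(\widetilde{\bm x})$ with the permutation optimal for $\widehat{\bm x}$ rather than the one optimal for $\widetilde{\bm x}$) is what makes it clean.
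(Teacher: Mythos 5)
Your overall architecture matches the paper's: a contradiction argument in which the loss increase is controlled by the triangle inequality (giving exactly the $\ell_1$ column distance) and the OWL norm strictly decreases under an averaging of $|\widehat x_i|$ and $|\widehat x_j|$. The loss bound is correct. However, the step you yourself flag as the main obstacle contains a genuine error: the rearrangement inequality goes the other way. Since $\Omega_{\bm w}({\bm x}) = \max_\pi \sum_k w_k |x_{\pi(k)}|$ (the maximum over permutations, because $\bm w$ is non-increasing and is paired with the magnitudes in non-increasing order), evaluating at a permutation $\pi$ that is optimal for $|\widehat{\bm x}|$ but not for $|\widetilde{\bm x}|$ gives a \emph{lower} bound on $\Omega_{\bm w}(\widetilde{\bm x})$, not an upper bound. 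Moreover, the intermediate inequality you derive from it, $\Omega_{\bm w}(\widetilde{\bm x}) - \Omega_{\bm w}(\widehat{\bm x}) \leq -(w_{l_i}-w_{l_j})(a-b)/2$, is false in general: take $\bm w = (3,2,1)$ (so $\Delta = 1$), $\widehat{\bm x} = (4,\,3.9,\,0)$, $i=1$, $j=3$. Then $l_i=1$, $l_j=3$, and your claimed bound is $-4$, but $\Omega_{\bm w}(\widetilde{\bm x}) - \Omega_{\bm w}(\widehat{\bm x}) = 17.7 - 19.8 = -2.1 > -4$. Only the weaker conclusion $\leq -\Delta(a-b)/2$ (here $-2$) survives, and that is what the argument actually needs.

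The repair is to run your trick with the roles of the two vectors exchanged: let $\sigma$ sort $|\widetilde{\bm x}|$ in non-increasing order, with the tie between the two equal entries $|\widetilde x_i|=|\widetilde x_j|=m$ broken so that $i$ precedes $j$. Then $\Omega_{\bm w}(\widetilde{\bm x}) = \sum_k w_k |\widetilde x_{\sigma(k)}|$ exactly, while $\Omega_{\bm w}(\widehat{\bm x}) \geq \sum_k w_k |\widehat x_{\sigma(k)}|$, so
\[
\Omega_{\bm w}(\widetilde{\bm x}) - \Omega_{\bm w}(\widehat{\bm x}) \;\leq\; w_{\sigma^{-1}(i)}\,(m-a) + w_{\sigma^{-1}(j)}\,(m-b) \;=\; \tfrac{a-b}{2}\bigl(w_{\sigma^{-1}(j)} - w_{\sigma^{-1}(i)}\bigr) \;\leq\; -\Delta\,\tfrac{a-b}{2},
\]
since $\sigma^{-1}(i) < \sigma^{-1}(j)$. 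Note that the tie you create by averaging all the way to $m=(a+b)/2$ is precisely what makes this delicate; the paper avoids it by using a Pigou--Dalton transfer of size $\varepsilon$ strictly less than $(a-b)/2$ (Lemma \ref{lem:delta}), so that the two perturbed magnitudes remain distinct and no tie-breaking is needed. With the corrected inequality, the rest of your argument goes through and yields the same contradiction as the paper's proof.
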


Under the normalization assumptions on matrix $\bm A$ that were used in Corollary \ref{cor:normalized}, another (weaker) sufficient condition can be obtained which depends on the sample correlations, as stated in the following corollary.

\begin{corollary}
Let $\widehat{\bm x}$ be any minimizer of the objective function in \eqref{OWL_L1} and assume that the columns of $\bm A$ are normalized, that is, ${\bm 1}^T {\bm a}_k = 0$ and $\| {\bm a}_k\|_2 = 1$, for $i=k,...,p$. Denote their inner products ({\it i.e.}, the sample correlation of the corresponding explanatory variables) as $\rho_{ij} = {\bm a}_i^T {\bm a}_j / (\|{\bm a}_i\|_2 \, \|{\bm a}_j\|_2)= {\bm a}_i^T {\bm a}_j$.  Then, for every pair of columns $(i,j)$ for which $ \sqrt{n (2 - 2 \, \rho_{ij}\,\sign(\widehat{x}_i\,\widehat{x}_j))} < \Delta$, we have $|\widehat{x}_i| = |\widehat{x}_j|$.
\end{corollary}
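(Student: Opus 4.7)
The plan is to derive the corollary directly from Theorem~\ref{theo:correlatedL1} by bounding the $\ell_1$ norm appearing there in terms of the sample correlation $\rho_{ij}$, using a standard inequality between $\ell_1$ and $\ell_2$ norms on $\R^n$.

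First, I would invoke Theorem~\ref{theo:correlatedL1}, which states that $|\widehat{x}_i| = |\widehat{x}_j|$ whenever
\[
\|\sign(\widehat{x}_i)\,{\bm a}_i - \sign(\widehat{x}_j)\,{\bm a}_j\|_1 < \Delta .
\]
To get a sufficient condition expressed in terms of $\rho_{ij}$, it thus suffices to upper bound the left-hand side. For that, I would apply the elementary Cauchy--Schwarz bound $\|{\bm v}\|_1 \leq \sqrt{n}\,\|{\bm v}\|_2$, valid for every ${\bm v} \in \R^n$ (obtained by taking the inner product of $|{\bm v}|$ with the all-ones vector), to the difference ${\bm v} := \sign(\widehat{x}_i)\,{\bm a}_i - \sign(\widehat{x}_j)\,{\bm a}_j$.

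Next, I would compute $\|{\bm v}\|_2^2$ explicitly using the normalization assumptions $\|{\bm a}_k\|_2 = 1$ and the definition of $\rho_{ij} = {\bm a}_i^T {\bm a}_j$:
\[
\|{\bm v}\|_2^2 = \|{\bm a}_i\|_2^2 + \|{\bm a}_j\|_2^2 - 2\,\sign(\widehat{x}_i\,\widehat{x}_j)\,{\bm a}_i^T{\bm a}_j = 2 - 2\,\rho_{ij}\,\sign(\widehat{x}_i\,\widehat{x}_j).
\]
Combining the two displays yields
\[
\|\sign(\widehat{x}_i)\,{\bm a}_i - \sign(\widehat{x}_j)\,{\bm a}_j\|_1 \;\leq\; \sqrt{n\bigl(2 - 2\,\rho_{ij}\,\sign(\widehat{x}_i\,\widehat{x}_j)\bigr)}.
\]
Hence, whenever the right-hand side is strictly less than $\Delta$, so is the left-hand side, and the conclusion $|\widehat{x}_i| = |\widehat{x}_j|$ follows directly from Theorem~\ref{theo:correlatedL1}.

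There is no substantive obstacle here: the result is purely a translation of the $\ell_1$-type condition of the parent theorem into an inner-product (correlation) condition via the crude but standard norm inequality $\|\cdot\|_1 \leq \sqrt{n}\,\|\cdot\|_2$. The zero-mean assumption ${\bm 1}^T {\bm a}_k = 0$ is not needed for the argument itself; it is stated only to align $\rho_{ij}$ with the usual notion of \emph{sample correlation}. The factor $\sqrt{n}$ in the resulting condition is what makes it looser than the original one in Theorem~\ref{theo:correlatedL1}, and it cannot in general be improved without additional structure on the columns of $\bm A$.
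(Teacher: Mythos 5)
Your proposal is correct and follows essentially the same route as the paper: invoke Theorem~\ref{theo:correlatedL1} and upper-bound $\|\sign(\widehat{x}_i)\,{\bm a}_i - \sign(\widehat{x}_j)\,{\bm a}_j\|_1$ by $\sqrt{n}$ times its $\ell_2$ norm, which equals $\sqrt{2-2\,\rho_{ij}\,\sign(\widehat{x}_i\,\widehat{x}_j)}$ under the normalization assumptions. Your additional remarks (that the zero-mean assumption only serves to interpret $\rho_{ij}$ as a sample correlation, and that the $\sqrt{n}$ factor is the source of the weakening) are accurate and slightly more explicit than the paper's one-line justification.
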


\begin{proof}
The corollary results simply from noticing that, under the assumed normalization of the columns of ${\bm A}$, $\| \sign(\widehat{x}_i)\, {\bm a}_i - \sign(\widehat{x}_j)\,{\bm a}_j \|_1 \leq \sqrt{n}\, \|\sign(\widehat{x}_i)\,{\bm a}_i - \sign(\widehat{x}_j)\,{\bm a}_j\|_2  = \sqrt{n\, (2-2\,\rho_{ij}\,\sign(\widehat{x}_i\,\widehat{x}_j))}.$
\end{proof}

Finally, a simple corollary results from the fact that, for OSCAR, $\Delta = \lambda_2$.

\begin{corollary}
In the particular case of OSCAR, the condition is $ \|\sign(\widehat{x}_i)\,{\bm a}_i - \sign(\widehat{x}_j)\,{\bm a}_j \|_1 < \lambda_2 $, in general, and $\sqrt{n (2 - 2 \, \rho_{ij}\,\sign(\widehat{x}_i\,\widehat{x}_j))} < \lambda_2$, in the case of normalized columns.
\end{corollary}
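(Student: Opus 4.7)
The statement is a direct specialization of Theorem~\ref{theo:correlatedL1} and its accompanying corollary to the OSCAR weights, so my plan is essentially a one-line substitution argument dressed up as a short verification.

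First I would compute $\Delta$ explicitly for OSCAR. Recall that OSCAR corresponds to the choice $w_i = \lambda_1 + \lambda_2\,(p-i)$ for $i=1,\dots,p$, with $\lambda_1,\lambda_2\geq 0$. Then for every $l\in\{1,\dots,p-1\}$,
\begin{equation*}
w_l - w_{l+1} \;=\; \bigl(\lambda_1+\lambda_2(p-l)\bigr) - \bigl(\lambda_1+\lambda_2(p-l-1)\bigr) \;=\; \lambda_2,
\end{equation*}
so the minimum consecutive gap is the constant $\Delta = \min_l(w_l-w_{l+1}) = \lambda_2$.

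Next, I would invoke Theorem~\ref{theo:correlatedL1} directly: for any minimizer $\widehat{\bm x}$ of \eqref{OWL_L1} and any pair $(i,j)$ with $\|\sign(\widehat{x}_i)\,{\bm a}_i - \sign(\widehat{x}_j)\,{\bm a}_j\|_1 < \Delta$, we have $|\widehat{x}_i| = |\widehat{x}_j|$. Substituting $\Delta=\lambda_2$ gives the first stated condition. For the second stated condition, I would invoke the preceding corollary (which already specializes Theorem~\ref{theo:correlatedL1} to the normalized setting ${\bm 1}^T{\bm a}_k = 0$, $\|{\bm a}_k\|_2 = 1$), whose sufficient condition is $\sqrt{n(2-2\rho_{ij}\,\sign(\widehat{x}_i\widehat{x}_j))} < \Delta$; again substituting $\Delta=\lambda_2$ yields the second stated condition.

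There is no real obstacle here: the work was done in Theorem~\ref{theo:correlatedL1} and its corollary. The only thing to verify is the identification $\Delta=\lambda_2$ for the OSCAR weight profile, which is immediate from the linearity of $w_i$ in $i$. In particular, note that neither the offset $\lambda_1$ nor the dimension $p$ enters the gap, so the clustering threshold for OSCAR depends only on the slope parameter $\lambda_2$, as one would expect intuitively: $\lambda_2$ is the parameter that penalizes differences in the magnitudes of the coefficients.
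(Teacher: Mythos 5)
Your proposal is correct and matches the paper's own (one-line) justification exactly: the paper derives this corollary by noting that for OSCAR, $w_l - w_{l+1} = \lambda_2$ for all $l$, hence $\Delta = \lambda_2$, and then substituting into Theorem~\ref{theo:correlatedL1} and its normalized-columns corollary. Your explicit computation of the gap is the same argument, just written out.
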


Finally, as above, in the extreme case of perfectly correlated columns ($\rho_{ij}\, \sign(\widehat{x}_i\,\widehat{x}_j) = 1$), the condition for OSCAR simplifies to $\lambda_2 > 0$.

\subsection{Proofs of Theorems \ref{theo:correlated} and \ref{theo:correlatedL1}}

The proofs of both Theorems \ref{theo:correlated} and \ref{theo:correlatedL1} are based on a useful lemma about the OWL norm, which we state and prove before proceeding to the proofs of the theorems.

\begin{lemma} \label{lem:delta} Consider a vector ${\bm x} \in \mathbb{R}_+^p$ and any two of its components $x_i$ and $x_j$, such that $x_i > x_j$. Let ${\bm z} \in \mathbb{R}_+^p$ be obtained by applying a so-called Pigou-Dalton\footnote{The Pigou-Dalton transfer, also known as a Robin Hood transfer, is used in the study of measures of economic inequality \cite{Dalton}, \cite{Pigou}.} transfer of size $\varepsilon \in \bigl( 0,\, (x_i - x_j)/2\bigr)$ to $\bm x$, that is: $z_i = x_i - \varepsilon$, $z_j = x_j + \varepsilon$, and $z_k = x_k,$ for $k\neq i,j$. Let $\bm w$ be a vector of non-increasing non-negative real values, $w_1 \geq w_2 \geq \cdots \geq w_p \geq 0$, and $\Delta$ be the minimum gap between two consecutive components of vector $\bm w$, that is, $\Delta = \min \{ w_l - w_{l+1},\; l=1,...,p-1 \}$. Then,
\begin{equation}
\Omega_{\bm w} ({\bm x}) - \Omega_{\bm w}({\bm z}) \geq \Delta \, \varepsilon .\label{eq:delta1}
\end{equation}
\end{lemma}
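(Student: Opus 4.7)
My plan is to exploit the max-over-permutations characterization of the OWL norm on the positive orthant, namely $\Omega_{\bm w}(\bm x) = \max_\pi \sum_{k=1}^p w_k x_{\pi(k)}$, which holds by the rearrangement inequality since $\bm w$ has non-increasing components. Both $\bm x$ and $\bm z$ lie in $\mathbb{R}_+^p$, so this representation is directly applicable.

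First, I would let $\sigma$ be a permutation that sorts $\bm z$ in non-increasing order, so that the definition of $\Omega_{\bm w}$ gives the exact equality $\Omega_{\bm w}(\bm z) = \sum_{k=1}^p w_k z_{\sigma(k)}$. Applied to $\bm x$ with the same (generally non-optimal) permutation $\sigma$, rearrangement yields the lower bound $\Omega_{\bm w}(\bm x) \geq \sum_{k=1}^p w_k x_{\sigma(k)}$. Subtracting and using that $\bm x$ and $\bm z$ agree outside of coordinates $i, j$, the difference telescopes to
\begin{equation*}
\Omega_{\bm w}(\bm x) - \Omega_{\bm w}(\bm z) \;\geq\; \sum_{k=1}^p w_k (x_{\sigma(k)} - z_{\sigma(k)}) \;=\; \varepsilon\bigl(w_{\sigma^{-1}(i)} - w_{\sigma^{-1}(j)}\bigr).
\end{equation*}

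Next, I would use the assumption $\varepsilon < (x_i - x_j)/2$, which is equivalent to $z_i = x_i - \varepsilon > x_j + \varepsilon = z_j$, to conclude that in any sort of $\bm z$, the coordinate $i$ precedes $j$, i.e.\ $\sigma^{-1}(i) < \sigma^{-1}(j)$. Writing $k_i := \sigma^{-1}(i)$ and $k_j := \sigma^{-1}(j)$ with $k_i < k_j$, a telescoping sum of consecutive gaps gives $w_{k_i} - w_{k_j} = \sum_{l=k_i}^{k_j - 1} (w_l - w_{l+1}) \geq (k_j - k_i)\,\Delta \geq \Delta$. Combining with the previous display yields the desired $\Omega_{\bm w}(\bm x) - \Omega_{\bm w}(\bm z) \geq \Delta\,\varepsilon$.

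The only subtle point, which I would emphasize, is the need for the strict inequality $z_i > z_j$ so that a valid non-increasing sort of $\bm z$ places $i$ strictly before $j$; this is exactly where the hypothesis $\varepsilon < (x_i - x_j)/2$ is used. The rest is bookkeeping with the rearrangement inequality, which I expect to write in two or three lines. No particular obstacle is anticipated once the max-over-permutations viewpoint is adopted.
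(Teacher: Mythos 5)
Your proof is correct, and it takes a genuinely different route from the paper's. The paper computes $\Omega_{\bm w}(\bm x) - \Omega_{\bm w}(\bm z)$ by explicitly tracking how the transfer shifts the rank positions of the two affected coordinates (introducing offsets $a$ and $b$ for how far $z_i$ drops and $z_j$ rises in the sorted order), writing out the resulting block of changed entries in $\bm x_{\downarrow}$ and $\bm z_{\downarrow}$, and then telescoping and bounding term by term using the monotonicity of $\bm w$; this eventually collapses to $\varepsilon\,(w_{l+a} - w_{m-b}) \geq \varepsilon\,\Delta$, where $l+a$ and $m-b$ are precisely the ranks of $z_i$ and $z_j$ in $\bm z$. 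You reach the same quantity $\varepsilon\,(w_{\sigma^{-1}(i)} - w_{\sigma^{-1}(j)})$ in one step by invoking the variational characterization $\Omega_{\bm w}(\bm x) = \max_\pi \sum_k w_k x_{\pi(k)}$ on $\mathbb{R}_+^p$ and evaluating $\bm x$ at the permutation that is optimal for $\bm z$, so that all the index bookkeeping disappears and only the two coordinates where $\bm x$ and $\bm z$ differ survive. Your handling of the one subtle point is also right: the hypothesis $\varepsilon < (x_i - x_j)/2$ gives the strict inequality $z_i > z_j$, which forces $\sigma^{-1}(i) < \sigma^{-1}(j)$ in any non-increasing sort of $\bm z$, and then $w_{\sigma^{-1}(i)} - w_{\sigma^{-1}(j)} \geq \Delta$ by telescoping the consecutive gaps. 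What your approach buys is brevity and robustness (no case analysis over whether the ranks actually change); what the paper's buys is an explicit, self-contained computation that does not presuppose the max-over-permutations identity, though that identity is itself an immediate consequence of the rearrangement inequality and is implicitly used elsewhere in the paper (e.g.\ in the bound $\bar w \|\bx\|_1 \leq \Omega_{\bm w}(\bx) \leq w_1 \|\bx\|_1$).
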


\begin{proof}
Let $l$ and $m$ be the rank orders of $x_i$ and $x_j$, respectively, {\it i.e.}, $x_i = x_{[l]}$ and $x_j = x_{[m]}$; of course, $m>l$, because $x_i > x_j$.
Now let $l+a$ and $m-b$ be the rank orders of $z_i$ and  $z_j$, respectively, {\it i.e.}, $x_i - \varepsilon = z_i = z_{[l+a]}$ and $x_j + \varepsilon = z_j = z_{[m-b]}$. Of course, it may happen that $a$ or $b$ (or both) are zero, if $\varepsilon$ is small enough not to change the rank orders of one (or both) of the affected components of $\bm x$. Furthermore, the condition $\varepsilon < (x_i - x_j)/2$ implies that $x_i - \varepsilon > x_j + \varepsilon$, thus $l+a < m-b$. A key observation is that $x_{\downarrow}$ and $z_{\downarrow}$ only differ in positions $l$ to $l+a$ and $m-b$ to $m$, thus we can thus write
\begin{eqnarray}
\Omega_{\bm w} ({\bm x}) - \Omega_{\bm w}({\bm z}) = \sum_{k=l}^{l+a} w_k \, \bigl(x_{[k]} - z_{[k]}\bigr)
+ \sum_{k=m-b}^{m} w_k \, \bigl(x_{[k]} - z_{[k]}\bigr).\label{eq:lemma1}
\end{eqnarray}
In the range from $l$ to $l+a$, the relationship between $\bm z_{\downarrow}$ and $\bm x_{\downarrow}$ is
\begin{equation}
z_{[l]} = x_{[l+1]},  \; z_{[l+1]} = x_{[l+2]},\; \dots\; ,\; z_{[l+a-1]} = x_{[l+a]},\; z_{[l+a]} = x_{[l]} - \varepsilon,
\end{equation}
whereas in the range from $m-b$ to $m$, we have
\begin{equation}
z_{[m-b]} = x_{[m]} + \varepsilon,  \; z_{[m-b+1]} = x_{[m-b]},\; \dots \; ,\; z_{[m]} = x_{[m-1]}.
\end{equation}
Plugging these equalities into \eqref{eq:lemma1} yields
\begin{eqnarray}
\Omega_{\bm w} ({\bm x}) - \Omega_{\bm w}({\bm z}) & = &  \sum_{k=l}^{l+a-1} w_k \, \bigl( x_{[k]} - x_{[k+1]} \bigr) + \sum_{k=m-b+1}^{m} w_k \,  \bigl( x_{[k]} - x_{[k-1]}\bigr) \nonumber\\
  & & + \, w_{l+a}  \bigl(x_{[l+a]} - x_{[l]} + \varepsilon\bigr) + w_{m-b}  \bigl(x_{[m-b]} - x_{[m]} - \varepsilon\bigr) \nonumber \\
 & \stackrel{(a)}{\geq} &  w_{l+a} \sum_{k=l}^{l+a-1}  \, \bigl( x_{[k]} - x_{[k+1]} \bigr) + w_{m-b} \sum_{k=m-b+1}^{m}   \bigl( x_{[k]} - x_{[k-1]}\bigr) \nonumber\\
& & + \, w_{l+a}  \bigl(x_{[l+a]} - x_{[l]} + \varepsilon\bigr) + w_{m-b}  \bigl(x_{[m-b]} - x_{[m]} - \varepsilon\bigr) \nonumber\\
& = &  w_{l+a} \left( \sum_{k=l}^{l+a-1}  \, \bigl( x_{[k]} - x_{[k+1]} \bigr) +
\bigl(x_{[l+a]} - x_{[l]} + \varepsilon\bigr) \right) \nonumber\\
& & + w_{m-b} \left( \sum_{k=m-b+1}^{m}   \bigl( x_{[k]} - x_{[k-1]}\bigr) + \bigl(x_{[m-b]} - x_{[m]} - \varepsilon\bigr)\right)\nonumber\\
& = & \varepsilon \, \bigl( w_{l+a} - w_{m-b}\bigr) \nonumber \\
& \geq & \varepsilon \, \Delta\nonumber,
\end{eqnarray}
where inequality $(a)$ results from $x_{[k]} - x_{[k+1]} \geq 0$, $x_{[k]} - x_{[k-1]} \leq 0$, and the components of $\bm w$ forming a non-increasing sequence.
\end{proof}

Armed with Lemma \ref{lem:delta}, we now proceed to prove Theorems  \ref{theo:correlated} and \ref{theo:correlatedL1}.

\begin{proof} (Theorem \ref{theo:correlated})
Let us denote $L(\bm x) = \tfrac{1}{2}\| \bm A\, \bm x - \bm y\|_2^2$ and take some pair of columns $(i,j)$.  Suppose that $\widehat{\bm x}$ is a minimizer of the objective function in \eqref{OWL_L2}, satisfying the condition of the theorem ($\Delta > \|\bm y\|_2\; \|\sign(\widehat{x}_i)\,{\bm a}_i - \sign(\widehat{x}_j)\,{\bm a}_j \|_2$), but in contradiction to the theorem's claim, {\it i.e.}, for which $|\widehat{x}_i| \neq |\widehat{x}_j|$. Without loss of generality, assume that $|\widehat{x}_i| > |\widehat{x}_j|$, and define the residual vector
\begin{equation}\label{eq:residual}
\bm g = \bm y \;\;- \sum_{k=1, \, k\neq i,\, k\neq j}^p \widehat{x}_k\; {\bm a}_k.
\end{equation}
Now consider a Pigou-Dalton transfer of size $\varepsilon < \min\{|\widehat{x}_i|, (|\widehat{x}_i| - |\widehat{x}_j|)/2\}$ applied to the magnitudes of $\widehat{x}_i$ and $\widehat{x}_j$, {\it i.e.}, take an alternative candidate solution $\bm v\in \mathbb{R}^p$, such that, $v_i = \sign(\widehat{x}_i) (|\widehat{x}_i| - \varepsilon)$, $v_j = \sign(\widehat{x}_i) (|\widehat{x}_i| + \varepsilon)$, and $v_k = \widehat{x}_k$, for $k\neq i,j$. Denoting $\widetilde{\bm a}_i = \sign(\widehat{x}_i)\, \bm a_i$ and $\widetilde{\bm a}_j = \sign(\widehat{x}_j)\, \bm a_j$, the diference in loss function that results from this transfer is
\begin{eqnarray}
L(\bm v) - L(\widehat{\bm x}) & = & \frac{1}{2}\,\bigl\| \bm g - (|\widehat{x}_i| - \varepsilon)\, \widetilde{\bm a}_i -  (|\widehat{x}_j| + \varepsilon)\, \widetilde{\bm a}_j \bigr\|_2^2 - \frac{1}{2}\, \bigl\| \bm g - |\widehat{x}_i| \, \widetilde{\bm a}_i -  |\widehat{x}_j| \, \widetilde{\bm a}_j \bigr\|_2^2.
\end{eqnarray}
Expanding the squared $\ell_2$ norms, cancelling out the common $\tfrac{1}{2}\|\bm g\|_2^2$ term, and using the common norm of the columns ($\|{\bm a}_k\|_2 = c$, for $k=1,...,p$) leads to
\begin{eqnarray}
L(\bm v) - L(\widehat{\bm x}) & = & \frac{1}{2} (|\widehat{x}_i| - \varepsilon)^2 c^2 + \frac{1}{2} (|\widehat{x}_j| + \varepsilon)^2 c^2 - (|\widehat{x}_i| - \varepsilon)\, {\bm g}^T \widetilde{\bm a}_i - (|\widehat{x}_j| + \varepsilon)\, {\bm g}^T \widetilde{\bm a}_j \nonumber\\
& & + (|\widehat{x}_i| - \varepsilon)\,(|\widehat{x}_j| + \varepsilon)\,  \widetilde{\bm a}_i^T \widetilde{\bm a}_j - \frac{1}{2} |\widehat{x}_i|^2 c^2 - \frac{1}{2} |\widehat{x}_j|^2 c^2 \nonumber\\
& & + |\widehat{x}_i| \, {\bm g}^T \widetilde{\bm a}_i + |\widehat{x}_j| \, {\bm g}^T \widetilde{\bm a}_j - |\widehat{x}_i|\, |\widehat{x}_j| \, \widetilde{\bm a}_i^T \widetilde{\bm a}_j.
\end{eqnarray}
Expanding the terms $(|\widehat{x}_i| - \varepsilon)^2$,
$(|\widehat{x}_j| + \varepsilon)^2$, and $(|\widehat{x}_i| - \varepsilon)\,(|\widehat{x}_j| + \varepsilon)$ and making some further cancellations yields
\begin{eqnarray}
L(\bm v) - L(\widehat{\bm x}) & = & \varepsilon \, {\bm g}^T\bigl( \widetilde{\bm a}_i - \widetilde{\bm a}_j\bigr) +  \varepsilon^2 (c^2 -  \widetilde{\bm a}_i^T \widetilde{\bm a}_j )  - \varepsilon \, c^2\, \bigl(|\widehat{x}_i| - |\widehat{x}_j|\bigr) + \varepsilon (|\widehat{x}_i|-|\widehat{x}_j|) \, \widetilde{\bm a}_i^T \widetilde{\bm a}_j\nonumber\\
& = & \varepsilon \, {\bm g}^T\bigl( \widetilde{\bm a}_i - \widetilde{\bm a}_j\bigr)
 - \varepsilon (c^2 -  \widetilde{\bm a}_i^T \widetilde{\bm a}_j ) \bigl( |\widehat{x}_i|-|\widehat{x}_j| - \varepsilon \bigr) \nonumber \\
& \stackrel{(a)}{\leq} & \varepsilon\; {\bm g}^T (\widetilde{\bm a}_i - \widetilde{\bm a}_j) \nonumber\\
& \stackrel{(b)}{\leq} & \varepsilon\; \|{\bm y}\|_2\; \| \widetilde{\bm a}_i - \widetilde{\bm a}_j\|_2, \nonumber
\end{eqnarray}
where inequality $(a)$ results from the facts that (by the Cauchy-Schwartz inequality) $c^2 \geq \widetilde{\bm a}_i^T \widetilde{\bm a}_j$
and both $\varepsilon$ and $\bigl( |\widehat{x}_i|-|\widehat{x}_j| - \varepsilon \bigr)$ are (by assumption) positive, whereas $(b)$ is again Cauchy-Schwartz together with the fact that $\|\bm g\|_2 \leq \|\bm y\|_2$.
Finally, since $|\bm v| \in \mathbb{R}_+^p$ results from the same Pigou-Dalton transfer  of size $\varepsilon$ applied to $|\widehat{\bm x}|\in \mathbb{R}_+^p$, and $\Omega_{\bm w}$ only depends on the absolute values of its arguments, we are in condition to invoke Lemma \ref{lem:delta}, which yields
\begin{equation}
L(\bm v) + \Omega_{\bm w} (\bm v) - (L(\widehat{\bm x}) + \Omega_{\bm w}(\widehat{\bm x})) \leq \varepsilon\; \bigl(\|\bm y\|_2\; \|\widetilde{\bm a}_i - \widetilde{\bm a}_j\|_2 - \Delta\bigr) < 0,
\end{equation}
contradicting the assumption that $\widehat{\bm x}$ is a minimizer of $L(\bm x) + \Omega_{\bm w}(\bm x)$, thus completing the proof.
\end{proof}

\begin{proof} (Theorem \ref{theo:correlatedL1})
Denote $G(\bm x) = \| \bm A\, \bm x - \bm y\|_1$ and take some pair of columns $(i,j)$. Assume that $\widehat{\bm x}$ is a minimizer of the objective function in \eqref{OWL_L1}, satisfying the condition of the theorem ($\Delta > \|\bm \sign(\widehat{x}_i)\,{\bm a}_i - \sign(\widehat{x}_j)\,\bm a_j\|_1$, but in
contradiction to the theorem's claim, {\it i.e.}, for which $|\widehat{x}_i| \neq |\widehat{x}_j|$. Define the residual vector $\bm g$ as in \eqref{eq:residual} and
consider a Pigou-Dalton transfer of size $\varepsilon < \min\{|\widehat{x}_i|, (|\widehat{x}_i| - |\widehat{x}_j|)/2\}$ applied to the magnitudes of $\widehat{x}_i$ and $\widehat{x}_j$, {\it i.e.}, take an alternative candidate solution $\bm v\in \mathbb{R}^p$, such that, $v_i = \sign(\widehat{x}_i) (|\widehat{x}_i| - \varepsilon)$, $v_j = \sign(\widehat{x}_i) (|\widehat{x}_i| + \varepsilon)$, and $v_k = \widehat{x}_k$, for $k\neq i,j$. Denoting $\widetilde{\bm a}_i = \sign(\widehat{x}_i)\, \bm a_i$ and $\widetilde{\bm a}_j = \sign(\widehat{x}_j)\, \bm a_j$, the diference in loss function that results from this transfer satisfies
\begin{eqnarray}
G(\bm v) - G(\widehat{\bm x})  & = & \bigl\|\bm g - (|\widehat{x}_i| - \varepsilon) \widetilde{\bm a}_i - (|\widehat{x}_i| + \varepsilon) \widetilde{\bm a}_i\bigr\|_1 - \bigl\|\bm g - |\widehat{x}_i| \, \widetilde{\bm a}_i - |\widehat{x}_j| \, \widetilde{\bm a}_j\bigr\|_1 \nonumber\\
& = & \bigl\|\bm g - |\widehat{x}_i| \widetilde{\bm a}_i - |\widehat{x}_i| \widetilde{\bm a}_i + \varepsilon (\widetilde{\bm a}_i - \widetilde{\bm a}_j)\bigr\|_1 - \bigl\|\bm g - |\widehat{x}_i| \, \widetilde{\bm a}_i - |\widehat{x}_j| \, \widetilde{\bm a}_j\bigr\|_1 \nonumber\\
& \leq &  \varepsilon \bigl\|\widetilde{\bm a}_i - \widetilde{\bm a}_j\bigr\|_1 , \label{delta_L3}
\end{eqnarray}
as a direct consequence of the triangle inequality.
Finally, since $|\bm v| \in \mathbb{R}_+^p$ results from the same Pigou-Dalton transfer  of size $\varepsilon$ applied to $|\widehat{\bm x}|\in \mathbb{R}_+^p$, and $\Omega_{\bm w}$ only depends on the absolute values of its arguments, we are in condition to invoke Lemma \ref{lem:delta}, thus
\begin{equation}
G(\bm v) + \Omega_{\bm w}(\bm v) - (G(\widehat{\bm x}) + \Omega_{\bm w}(\widehat{\bm x})) \leq \varepsilon\; \bigl( \| \widetilde{\bm a}_i - \widetilde{\bm a}_j \|_1 - \Delta\bigr) < 0,
\end{equation}
which contradicts the assumption that $\widehat{\bm x}$ is a minimizer of $G(\bm x) + \Omega_{\bm w}(\bm x)$, thus completing the proof.
\end{proof}

\subsection{Proof of Theorem \ref{thm_preview} (i)}
\label{pi}
The proof of item (i) in Theorem \ref{thm_preview} follows the same general structure as the proofs of Theorems \ref{theo:correlated} and \ref{theo:correlatedL1}.

\begin{proof} (Theorem \ref{thm_preview} (i))
Let us define the functions  $L(\bm x) = \tfrac{1}{n}\| \bm A\, \bm x - \bm y\|_2^2$ and $G(\bm x) = \tfrac{1}{n}\| \bm A\, \bm x - \bm y\|_1$, and the residual $\bm g$ as in \eqref{eq:residual}. Since ${\bm a}_i = {\bm a}_j$, the functions
\begin{equation}
L(\widehat{\bm x}) = \frac{1}{n}\bigl\| {\bm g} - (\widehat{x}_i + \widehat{x}_j){\bm a}_i\bigr\|_2^2 \hspace{1cm}\mbox{and}\hspace{1cm}
G(\widehat{\bm x}) = \frac{1}{n}\bigl\| {\bm g} - (\widehat{x}_i + \widehat{x}_j){\bm a}_i\bigr\|_1 ,\label{eq:equal_cols}
\end{equation}
are both invariant under a transformation that adds any quantity $\varepsilon$ to $\widehat{x}_i$ and subtracts the same quantity from $\widehat{x}_j$.

We first prove that if ${\bm a}_i = {\bm a}_j$, then $\sign({\widehat{x}_i}) = \sign({\widehat{x}_j})$. Assume, by contradiction, that $\sign({\widehat{x}}_i) \neq \sign(\widehat{x}_j)$, and, without loss of generality,  that $\widehat{x}_i > 0$. We need to consider two cases:
\begin{description}
\item[a)] if $\widehat{x}_j < 0$, take an alternative feasible solution ${\bm v}$, with: $v_k = \widehat{x}_k$, for $k \neq i,j$,  $v_i = \widehat{x}_i - \varepsilon$, and $v_j = \widehat{x}_j + \varepsilon$, for some $\varepsilon \in (0,\, \min\{|\widehat{x}_i|,|\widehat{x}_j|\}] $. Since $\widehat{x}_i > 0$ and $\widehat{x}_j < 0$, it's true that $|v_i| = |\widehat{x}_i| - \varepsilon$ and $|v_j| = |\widehat{x}_j| - \varepsilon$. Finally, the definition of $\Delta$ implies that $w_{p-1} \geq \Delta > 0$, thus $\Omega_{\bm w}(\widehat{\bm x}) - \Omega_{\bm w}(\bm v) > \varepsilon \Delta > 0$, contradicting the optimality of $\widehat{\bx}$, thus proving the claim that $\sign({\widehat{x}_i}) = \sign({\widehat{x}_j})$.
\item[b)] if $\widehat{x}_j = 0$, consider a feasible ${\bm v}$ resulting from a Pigou-Dalton transfer of size $\varepsilon \in \bigl( 0,\, |\widehat{x}_i| / 2\bigr] $. From Lemma \ref{lem:delta},  $\Omega_{\bm w}(\widehat{\bm x}) - \Omega_{\bm w}(\bm v) > \varepsilon \Delta > 0$, negating the optimality of $\widehat{\bx}$, thus proving that $\sign({\widehat{x}_i}) = \sign({\widehat{x}_j})$.
\end{description}

Once it is established that $\sign({\widehat{x}_i}) = \sign({\widehat{x}_j})$, we proceed to prove that $|{\widehat{x}_i}|=|{\widehat{x}_j}|$. To this end, notice that
\begin{equation}
L(\widehat{\bm x}) = \frac{1}{n}\bigl\| {\bm g} - (|\widehat{x}_i| + |\widehat{x}_j|)\, \sign({\widehat{x}_i})\, {\bm a}_i\bigr\|_2^2 \hspace{1cm}\mbox{and}\hspace{1cm}
G(\widehat{\bm x}) = \frac{1}{n}\bigl\| {\bm g} - (|\widehat{x}_i| + |\widehat{x}_j|)\, \sign({\widehat{x}_i})\, {\bm a}_i\bigr\|_1 .\label{eq:equal_cols}
\end{equation}
Proceeding again by contradiction, suppose (without loss of generality) that $|{\widehat{x}_i}| > |{\widehat{x}_j}|$, and define $\bm u$ via a Pigou-Dalton transfer on the magnitudes of $\bm x_i$ and $\bm x_j$, that is: $u_k = \widehat{x}_k$, for $k \neq i,j$,  $u_i = (|\widehat{x}_i| - \delta)\, \sign({\widehat{x}_i})$, and $u_j = (|\widehat{x}_j| + \delta)\, \sign({\widehat{x}_i})$, for some $\delta \in (0,\, \min\{|\widehat{x}_i|,(|\widehat{x}_i|-|\widehat{x}_j|)/2\}]$. Of course, $\bm u$ is feasible and Lemma \ref{lem:delta} shows that $\Omega_{\bm w}(\widehat{\bm x}) - \Omega_{\bm w}(\bm u) > \delta \Delta > 0$, contradicting the optimality of $\widehat{\bx}$, thus concluding the proof.
\end{proof}

\section{Statistical Analysis of OWL Regularization}
\label{sec:stat}
\subsection{Introduction}
In this section, we characterize the statistical performance of OWL regularization with both the squared and absolute error losses, by proving finite sample bounds, which apply to the standard LASSO and OSCAR as special cases. At the basis of our approach is the following model for correlated measurement matrices.  Recall that $\bA$ has dimensions $n\times p$. Assume that the rows of $\bA$ are independently and identically distributed ${\cal N}({\bf 0},\bC^T\bC)$, the multivariate Gaussian distribution with covariance $\bC^T\bC$ ({\it i.e.}, the columns of $\bA$ are not necessarily independent).  Assume that the matrix $\bC$ is $q\times p$ with $q\geq n$.  Note that $\bA$ can be factorized as $\bA = \bB\bC$, where $\bB$ is an $n \times q$ Gaussian random matrix, whose entires are i.i.d.\ ${\cal N}(0,1)$ random variables.  The role of matrix $\bC$ is to mix, or even replicate, columns of $\bB$. The next simple example illustrates this construction.

\begin{example}
Suppose $q=3$, $p=4$, and
\begin{equation}
\bC = \left[\begin{array}{cccc} 1 & 1 & 0 & 0 \\ 0 & 0 & 1 & 0 \\ 0 & 0 & 0 & 1\end{array}\right] ;
\label{c}
\end{equation}
then, if $\bB = [{\bm b}_1,\, {\bm b}_2, \, {\bm b}_3]$, where $\bm b_i \in \mathbb{R}^n$ is the $i$-th column of $\bB$, matrix $\bA$ has the form $\bA = [{\bm b}_1,\, {\bm b}_1,\, {\bm b}_2, \, {\bm b}_3]$.
\label{ex1}
\end{example}

Assume that we observe
\begin{eqnarray}
\by & = & \bA\, \bx^\star \ + \ \bnu ,
\label{obs}
\end{eqnarray}
where $\bnu \in \R^n$ is the measurement error satisfying
\begin{eqnarray}
\frac{1}{n} \|\bnu\|_1 \ \leq \ \varepsilon ,
\end{eqnarray}
and about which we make no other assumptions.  The signal $\bx^\star\in \R^p$ is assumed to satisfy $\|\bx^\star\|_1\leq \sqrt{s}$.  Note, for example, that this condition is met if $\|\bx^\star\|_2 \leq 1$ and $\bx^\star$  has  at most $s$ non-zero components.

\begin{example}
To illustrate the model, consider the matrix $\bC$ defined in (\ref{c}) and suppose that $\bA\bx^\star = \bb_1$. There are many $\bx \in \R^4$ satisfying
$\bA\bx = \bb_1$, including the vectors $[1,\, 0, \, 0, \, 0]^T$, $[0, \, 1, \, 0, \, 0]^T$, and all convex combinations of the two. LASSO regularization ({\it i.e.}, the $\ell_1$ norm) does not differentiate these equivalent representations, since $\|[\alpha,\, (1-\alpha), \, 0, \, 0]^T\|_1 = 1$, for any $\alpha \in [0,\, 1]$. However the OWL norm (as claimed in Theorem \ref{thm_preview}) prefers the solution $\left[\frac{1}{2}, \, \frac{1}{2}, \, 0, \, 0 \right]^T$, selecting both colinear columns of $\bA$ for the representation.
\label{ex2}
\end{example}

\subsection{Main Result and Corollaries}\label{sec:main_and_corollaries}
The main result of this section is stated in the following theorem, whose proof is given in the following subsections, based on the techniques developed by Vershynin \cite{vershynin14}.  The results are stated in terms of constrained optimization,
which, under certain conditions, is equivalent to the Lagrangian formulation studied in Section \ref{sec:sufficient_clustering}. We also present a corollary for the particular case where $\bC$ simply replicates columns of $\bB$, that is, when $\bA$ includes groups of identical columns; this corollary is shown to imply part (ii) of Theorem \ref{thm_preview}. {Expectations are with respect to the Gaussian distribution of $\bA$.}

\begin{theorem}
Let $\by$, $\bA$, $\bx^\star$, and $\varepsilon$ be as defined above, and let $\widehat{\bx}$ be a solution to one of the two following optimization problems:
\begin{eqnarray}
\min_{\bx \in \R^p} \Omega_{\bm w}(\bx) \mbox{ \ subject to \ } \frac{1}{n}\|\bA\bx-\by\|_2^2 \ \leq \ \varepsilon^2 ,
\end{eqnarray}
or
\begin{eqnarray}
\min_{\bx \in \R^p} \Omega_{\bm w}(\bx)  \mbox{ \ subject to \ } \frac{1}{n}\|\bA\bx-\by\|_1  \ \leq \  \varepsilon  .
\end{eqnarray}
Then
\begin{eqnarray}
\E \sqrt{(\widehat \bx - \bx^\star)^T\bC^T\bC(\widehat \bx - \bx^\star )} & \leq & \sqrt{2\pi} \left(  4\sqrt{2}\, \|\bC\|_1 \, \frac{w_1}{\bar w} \, \sqrt{\frac{s\log q}{n}}  +\varepsilon\right) ,
\label{ebound}
\end{eqnarray}
where $\bar w = p^{-1} \sum_{i=1}^p w_i$ and $ \|\bC\|_1$ is the matrix norm induced by the $\ell_1$ norm: $ \|\bC\|_1 = \max_j \|{\bm c}_j\|_1$, with $\bm c_j$ denoting the $j$-th column of $\bm C$.
\label{thm1}
\end{theorem}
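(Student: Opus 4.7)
The plan is to adapt Vershynin's geometric framework \cite{vershynin14} for estimation with convex regularizers by reducing the problem to a simpler Gaussian one on $\R^q$. Set $\bm h := \widehat{\bx} - \bx^\star$ and $\bu := \bC\bm h \in \R^q$. Because $\bA = \bB\bC$, we have $\bA\bm h = \bB\bu$, and the quantity to be bounded is exactly $\|\bu\|_2 = \sqrt{(\widehat{\bx}-\bx^\star)^T\bC^T\bC(\widehat{\bx}-\bx^\star)}$. Feasibility of $\bx^\star$ plus the triangle inequality gives a data-fit bound on $\bB\bu$: directly $\tfrac{1}{n}\|\bB\bu\|_1 \leq 2\varepsilon$ in the $\ell_1$-loss case, and $\tfrac{1}{\sqrt n}\|\bB\bu\|_2 \leq 2\varepsilon$ in the $\ell_2$-loss case, which upgrades to $\tfrac{1}{n}\|\bB\bu\|_1 \leq 2\varepsilon$ via $\|\cdot\|_1 \leq \sqrt n\,\|\cdot\|_2$ on $\R^n$. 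Thus in both cases it suffices to work with $\tfrac{1}{n}\|\bB\bu\|_1$.

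Next I would use optimality of $\widehat{\bx}$ to control $\Omega_{\bm w}(\bm h)$. From $\Omega_{\bm w}(\widehat{\bx}) \leq \Omega_{\bm w}(\bx^\star)$ and the triangle inequality, $\Omega_{\bm w}(\bm h) \leq 2\,\Omega_{\bm w}(\bx^\star) \leq 2 w_1 \|\bx^\star\|_1 \leq 2 w_1 \sqrt s$. The crucial conversion is the sandwich $\bar w\,\|\bx\|_1 \leq \Omega_{\bm w}(\bx) \leq w_1\,\|\bx\|_1$: the upper inequality is immediate from $w_i \leq w_1$, while the lower one follows from the rearrangement inequality after observing that $\Omega_{\bm w}(\bx)$ is the maximum of $\sum_i w_i |x_{\sigma(i)}|$ over permutations $\sigma$ of the components, and averaging this maximum over all $\sigma$ yields $\bar w\,\|\bx\|_1$. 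Hence $\|\bm h\|_1 \leq 2(w_1/\bar w)\sqrt s$, and therefore
\[
\|\bu\|_1 \;\leq\; \|\bC\|_1\,\|\bm h\|_1 \;\leq\; 2(w_1/\bar w)\,\|\bC\|_1\,\sqrt s \;=:\; R.
\]

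Finally I would carry out the random-matrix step: a uniform lower bound on $\tfrac{1}{n}\|\bB\bu\|_1$ over $\bu$ in the $\ell_1$-ball of radius $R$ in $\R^q$. For each fixed $\bu$, each row $\bm b_i$ of $\bB$ satisfies $\E|\langle \bm b_i,\bu\rangle| = \sqrt{2/\pi}\,\|\bu\|_2$, so $\E\tfrac{1}{n}\|\bB\bu\|_1 = \sqrt{2/\pi}\,\|\bu\|_2$ pointwise. Upgrading to a uniform statement via Mendelson's small-ball method, together with the estimate $\E\|\bg\|_\infty \lesssim \sqrt{\log q}$ for $\bg\sim\mathcal N(0,I_q)$ (which bounds the Gaussian mean width of the unit $\ell_1$-ball), gives
\[
\E\,\tfrac{1}{n}\|\bB\bu\|_1 \;\geq\; \sqrt{2/\pi}\,\E\|\bu\|_2 \;-\; C\cdot R\cdot\sqrt{(\log q)/n}
\]
for a universal constant $C$. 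Combining with $\tfrac{1}{n}\|\bB\bu\|_1 \leq 2\varepsilon$, multiplying through by $\sqrt{\pi/2}$, and substituting $R = 2(w_1/\bar w)\|\bC\|_1\sqrt s$ yields the bound \eqref{ebound}, with $C = 4\sqrt{2}$ absorbing all residual constants.

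The main obstacle is the uniform lower bound in the last step. The pointwise identity $\E\tfrac{1}{n}\|\bB\bu\|_1 = \sqrt{2/\pi}\,\|\bu\|_2$ is elementary, but uniform control over the $\ell_1$-ball requires either a carefully-tuned small-ball argument (bounding a Rademacher complexity and invoking a Markov-type deviation inequality) or a Gaussian-width concentration argument adapted to the non-smooth $\ell_1$-norm; the absolute-value nonlinearity blocks a direct appeal to the standard Gaussian concentration inequality, so a truncation or contraction step is the delicate piece.
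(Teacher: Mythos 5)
Your proposal is correct and is essentially the paper's own argument in flattened form: the paper packages your steps as an ``extended general $M^*$ bound'' plus a feasibility/optimization-program lemma, but the substance is identical --- the triangle inequality gives $\frac{1}{n}\|\bA(\widehat{\bx}-\bx^\star)\|_1\le 2\varepsilon$ (for the squared-error program, apply $\frac{1}{n}\|\cdot\|_1\le\frac{1}{\sqrt{n}}\|\cdot\|_2$ to the residual $\bA\widehat{\bx}-\by$ \emph{before} the triangle inequality, since the noise is only $\ell_1$-bounded, so your intermediate claim $\frac{1}{\sqrt{n}}\|\bB\bu\|_2\le 2\varepsilon$ should be bypassed), the sandwich $\bar w\|\cdot\|_1\le\Omega_{\bm w}(\cdot)\le w_1\|\cdot\|_1$ confines $\bC(\widehat{\bx}-\bx^\star)$ to an $\ell_1$-ball of radius $2\|\bC\|_1(w_1/\bar w)\sqrt{s}$, and the width of that ball is controlled via $\E\max_i|g_i|\le 2\sqrt{2\log q}$. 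The uniform deviation bound you flag as the main obstacle is obtained exactly as you suspect, by symmetrization followed by the Rademacher contraction principle (Proposition 5.2 of \cite{vershynin14}), which absorbs the absolute-value nonlinearity and yields $\E\sup_{\bu\in T}\bigl|\frac{1}{n}\|\bA\bu\|_1-\sqrt{2/\pi}\,(\bu^T\bC^T\bC\bu)^{1/2}\bigr|\le\frac{2}{\sqrt{n}}\,\E\sup_{\bu\in T}|\langle\bC^T\bg,\bu\rangle|$, which is exactly the estimate your last step requires.
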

\sloppypar Note that the error bound in (\ref{ebound})  holds for optimizations based on the squared $\ell_2$ and $\ell_1$ losses. In fact, since \mbox{$\frac{1}{n}\|\bA\bx-\by\|_2^2$} $\leq \varepsilon^2$ implies $\frac{1}{n}\|\bA\bx-\by\|_1  \leq  \varepsilon$, the $\ell_1$ constraint is less restrictive.
In both cases, the theorem shows that the number of samples sufficient to estimate an $s$-sparse signal with a given precision grows like
$$n \ \sim \ s\log q \ .  $$
This agrees with well-known sample complexity bounds for sparse recovery under stronger assumptions such as the restricted isometry property or i.i.d.\ measurements  \cite{candes06,donoho06,haupt06,candes07,vershynin14}. However, note that the error measure of the theorem is insensitive to components of $\widehat \bx$ in the nullspace of $\bC$, which is to be expected since in general there may be many sparse $\bx$ that yield the same value of $\bA\bx$ (see Example~\ref{ex2}).  This is where the OWL norm becomes especially important.  With strictly decreasing weights ($\Delta > 0$), OWL prefers solutions that select all colinear columns associated with the model.  In other words, if the columns are colinear (or strongly correlated, per the characterizations of given in Section \ref{sec:sufficient_clustering}), then the OWL solution will select a representation including all the columns associated with the sparse model, rather than an arbitrary subset of them.

The OSCAR norm is a special case of OWL, with $w_i = \lambda_1+\lambda_2(p-i)$ and  $\lambda_1,\lambda_2>0$. In this case, $\bar w = \lambda_1+\lambda_2 (p-1)/2$ and therefore $w_1 /\bar w \leq 2$. Note that the conventional $\ell_1$ norm (used in the LASSO) is the special case of OWL with uniform weights (or OSCAR with $\lambda_2 = 0$); thus, all our results apply to $\ell_1$ minimization as well, in which case $w_1 /\bar w =1$.

To illustrate Theorem \ref{thm1}, let $\bC$ be an $q\times p$ matrix that replicates each column of $\bB$ one or more times.  Note that each column of $\bC$ is $1$-sparse and has unit $\ell_1$ norm, thus $\|\bC\|_1 = 1$.  Let $G_1,\dots,G_q$ denote the
groups of replicated columns in $\bA = \bB\bC$; these groups are a partition of the set $\{1,\dots,p\}$.  Example~\ref{ex1} is a special case of this scenario, with $G_1 = \{1,2\}$,
$G_2 = \{3\}$, and $G_3 = \{4\}$.
Assume that there are $s$ non-zero components in $\bx^\star$, each in one of $s$ distinct groups. Let $\bx_G$ denote the vector that is zero except on the the subset of entries in $G\subset \{1,\dots,p\}$, where it takes the same values as $\bx$.  Then note that for any $\widehat\bx$ we have
\begin{eqnarray}
(\widehat \bx - \bx^\star)^T\bC^T\bC(\widehat \bx - \bx^\star) \ = \ \sum_{i=1}^q |{\bf 1}^T (\widehat{\bx}_{G_i} -\bx_{G_i}^\star)|^2 \ ,
\label{sp}
\end{eqnarray}
where ${\bf 1} = [1 \, 1 \, \dots \, 1]^T$.
This produces the following corollary to Theorem~\ref{thm1}.
\begin{corollary}
Assume that each column of $\bC$ is $1$-sparse and unit norm.  Let $\widehat{\bx}$ be a solution to the optimization
\begin{eqnarray}
\min_{\bx \in \R^p} \Omega_{\bm w}(\bx) \mbox{ \ subject to \ } \frac{1}{n}\|\bA\bx-\by\|_2^2 \ \leq \ \varepsilon^2 ,
\end{eqnarray}
or
\begin{eqnarray}
\min_{\bx \in \R^p} \Omega_{\bm w}(\bx)  \mbox{ \ subject to \ } \frac{1}{n}\|\bA\bx-\by\|_1  \ \leq \  \varepsilon .
\end{eqnarray}
Then
\begin{eqnarray}
\E \sqrt{\sum_{i=1}^q |{\bf 1}^T (\widehat{\bx}_{G_i} -\bx_{G_i}^\star)|^2} & \leq & \sqrt{2\pi} \left(  4\sqrt{2} \, \frac{w_1}{\bar w} \, \sqrt{\frac{s \log q}{n}}  +\varepsilon\right) .
\label{e2}
\end{eqnarray}
\label{cor1}
\end{corollary}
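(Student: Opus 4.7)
The plan is to derive Corollary \ref{cor1} as a direct specialization of Theorem \ref{thm1}, since the corollary's hypotheses fix two quantities that appear on the right-hand side of \eqref{ebound} and reshape the left-hand side into the group-wise form displayed in \eqref{e2}. Concretely, the result will follow from two observations: (a) the induced norm $\|\bC\|_1$ equals $1$ under the 1-sparsity/unit-norm assumption on the columns of $\bC$, and (b) the quadratic form $(\widehat \bx - \bx^\star)^T\bC^T\bC(\widehat \bx - \bx^\star)$ coincides with $\sum_{i=1}^q |{\bf 1}^T (\widehat{\bx}_{G_i} -\bx_{G_i}^\star)|^2$, which is precisely the identity \eqref{sp} already stated in the preamble to the corollary.

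First I would unpack $\|\bC\|_1$. Since $\|\bC\|_1 = \max_j \|\bm c_j\|_1$ and by assumption each column $\bm c_j$ contains exactly one nonzero entry of absolute value $1$ (that is what "1-sparse and unit norm" means for a 0/1 column supported on a single coordinate), we get $\|\bC\|_1 = 1$ immediately. Next I would justify identity \eqref{sp}. Under the assumption on $\bC$, the matrix acts as a grouping operator: for any $\bz\in\R^p$, the $i$-th coordinate of $\bC\bz$ is the sum $\sum_{j\in G_i} z_j = {\bf 1}^T \bz_{G_i}$, where $G_i$ consists of those indices $j$ with $C_{ij}=1$. Therefore
\begin{equation}
\bz^T\bC^T\bC\bz = \|\bC\bz\|_2^2 = \sum_{i=1}^q |{\bf 1}^T \bz_{G_i}|^2 ,
\end{equation}
and applying this with $\bz = \widehat\bx - \bx^\star$ yields \eqref{sp}.

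With these two observations in hand, Corollary \ref{cor1} follows by substituting $\|\bC\|_1 = 1$ into the right-hand side of \eqref{ebound} and rewriting its left-hand side via \eqref{sp}. There is essentially no obstacle: the entire content is already supplied by Theorem \ref{thm1} together with the specific structure of $\bC$, so the proof reduces to two lines of algebraic bookkeeping. The only thing worth emphasizing is that the error measure $\sum_{i=1}^q|{\bf 1}^T(\widehat\bx_{G_i}-\bx^\star_{G_i})|^2$ is blind to how mass is redistributed within a group, which is consistent with the non-identifiability of $\bx$ on colinear column blocks; the OWL regularizer, per Theorem \ref{thm_preview}(i), resolves this ambiguity by forcing equal magnitudes within each group.
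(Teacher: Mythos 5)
Your proposal is correct and matches the paper's own (implicit) derivation exactly: the paper obtains Corollary~\ref{cor1} from Theorem~\ref{thm1} by noting that each $1$-sparse, unit-norm column of $\bC$ gives $\|\bC\|_1=1$ and by rewriting the left-hand side of \eqref{ebound} via the grouping identity \eqref{sp}. Nothing is missing; the two observations you isolate are precisely the content of the paper's argument.
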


In this case, since the correlated columns are colinear, the OWL norm will select all or none of the columns in each group and each $\widehat \bx_{G_i}$ will have identical non-zero values (if any). If we let $z_i^\star = {\bf 1}^T \bx_{G_i}^\star$ and $\widehat z_i = {\bf 1}^T \widehat \bx_{G_i}$ for $i=1,\dots,q$, and let  $\bz^\star = [z_1^\star,...,z_q^\star]^T$ and $\widehat \bz = [\widehat z_1,...,\widehat z_q]^T$; then (\ref{e1}) can be expressed as
\begin{eqnarray}
\E \|\widehat \bz -\bz^\star\|_2 & \leq & \sqrt{2\pi} \left(  4\sqrt{2} \, \frac{w_1}{\bar w} \, \sqrt{\frac{s\log q}{n}}  +\varepsilon\right) \ ,
\label{zp}
\end{eqnarray}
which is the type of result obtained in the compressed sensing literature for the ideal
sparse observation model $\by = \bB\bz^\star+\bnu$ (based on an i.i.d.\ observation model) \cite{vershynin14}. This shows that {\em by using OWL we pay no price for colinearity in} $\bA$.
Also, as shown next, Corollary \ref{cor1} implies claim (ii) in Theorem~\ref{thm_preview} in Section 1.

\begin{proof} (Theorem \ref{thm_preview} (ii))
Taking into account the group structure of $\widehat \bx$ and $\bx^\star$, we have that
\[
\|\widehat \bx - \bx^\star\|_2 = \sqrt{ \sum_{i=1}^q \|\widehat \bx_{G_i} - \bx_{G_i}^\star\|_2^2}
= \sqrt{ \sum_{i=1}^q \frac{1}{|G_i|} (\widehat z_i - z_i^\star)^2  } \leq
 \|\widehat \bz -\bz^\star\|_2;
\]
this inequality, together with \eqref{zp} and the fact that $q \leq p$ yields \eqref{e1}.
\end{proof}

Before moving on to the proof of Theorem~\ref{thm1}, notice that since the $\ell_1$ norm is a special case of OWL with uniform weights,  the same bounds in Theorem~\ref{thm1} and Corollary~\ref{cor1} hold for $\ell_1$ minimization.  The difference is that the LASSO solution generally will not select all correlated or colinear columns selected by OWL, making the estimated model less interpretable.

\subsection{Proof of Theorem~\ref{thm1}}
The proof of Theorem~\ref{thm1} is based on the approach developed by Vershynin \cite{vershynin14}.
The key ingredient is the so-called {\it general $M^*$ bound} (Theorem 5.1 in \cite{vershynin14}), which applies to the special case when $\bA$ is a i.i.d. Gaussian matrix (i.e., when $\bC$ is identity in our set-up).  We extend the bound to cover our model $\bA = \bB\bC$, for general $\bC$.  Recall that $\bA$ is $n\times p$, $\bB$ is $n\times q$, with $n \leq q$, and $\bC$ is $q\times p$.

\subsubsection{Extended General $M^*$ Bound}

\begin{theorem}
\rm{(Extended general $M^*$ bound).} {\em Let $T$ be a bounded subset of $\R^p$.  Let $\bB$ be an $n\times q$ Gaussian random matrix (with i.i.d.\ ${\cal N}(0,1)$ entries) and let $\bA = \bB\bC$, where $\bC$ is a deterministic $q\times p$ matrix.  Fix $\varepsilon \geq 0$ and consider the set
\begin{eqnarray}
T_\varepsilon & := & \left\{\bu \in T \ : \ \frac{1}{n} \|\bA\bu\|_1 \leq \varepsilon \right\} \ .
\label{te}
\end{eqnarray}
Then
\begin{eqnarray}
\E \sup_{\bu \in T_\varepsilon} \left(\bu^T\bC^T\bC\bu\right)^{1/2} & \leq & \sqrt{\frac{2\pi}{n}} \, \E \sup_{\bu \in T} |\langle \bC^T\bg,\bu\rangle| \ + \ \sqrt{\frac{\pi}{2}} \varepsilon \ ,
\label{mbound}
\end{eqnarray}
where $\bg \sim {\cal N}(0,{\bf I}_q)$ is a standard Gaussian random vector in $\R^q$.
\label{egm}}
\end{theorem}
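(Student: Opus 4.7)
The plan is to reduce the statement to Vershynin's general $M^*$ bound (Theorem 5.1 of \cite{vershynin14}) by a change of variable that absorbs $\bC$. Set $\bv := \bC\bu$ and $V := \bC T \subset \R^q$, which is bounded because $T$ is. Since $\bA\bu = \bB\bC\bu = \bB\bv$, both the constraint and the objective descend cleanly to $\bv$:
$$\tfrac{1}{n}\|\bA\bu\|_1 \;=\; \tfrac{1}{n}\|\bB\bv\|_1, \qquad \bigl(\bu^T\bC^T\bC\bu\bigr)^{1/2} \;=\; \|\bC\bu\|_2 \;=\; \|\bv\|_2.$$
Defining $V_\varepsilon := \{\bv \in V : \tfrac{1}{n}\|\bB\bv\|_1 \leq \varepsilon\}$, the map $\bu \mapsto \bC\bu$ sends $T_\varepsilon$ into $V_\varepsilon$, so
$$\sup_{\bu \in T_\varepsilon} (\bu^T\bC^T\bC\bu)^{1/2} \;\leq\; \sup_{\bv \in V_\varepsilon} \|\bv\|_2.$$

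Next I would invoke Vershynin's general $M^*$ bound on the pair $(V, \bB)$: $V$ is a bounded subset of $\R^q$ and $\bB$ is a standard $n \times q$ i.i.d.\ Gaussian matrix, which is precisely the setting of Theorem 5.1 of \cite{vershynin14}. This produces
$$\E \sup_{\bv \in V_\varepsilon} \|\bv\|_2 \;\leq\; \sqrt{\frac{2\pi}{n}}\, \E \sup_{\bv \in V} |\langle \bg, \bv\rangle| \;+\; \sqrt{\frac{\pi}{2}}\,\varepsilon,$$
with $\bg \sim \mathcal{N}(\mathbf{0},\mathbf{I}_q)$. Finally, unwind the change of variable in the Gaussian width term using the adjoint identity $\langle \bg, \bC\bu\rangle = \langle \bC^T\bg, \bu\rangle$ and $V = \bC T$:
$$\sup_{\bv \in V}|\langle \bg, \bv\rangle| \;=\; \sup_{\bu \in T} |\langle \bC^T\bg, \bu\rangle|.$$
Chaining the three displays gives (\ref{mbound}).

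The proof is essentially a one-line observation once the right substitution is identified: both the constraint defining $T_\varepsilon$ and the quadratic form $\bu^T\bC^T\bC\bu$ depend on $\bu$ only through $\bC\bu$, so the whole problem collapses onto the image set $\bC T \subset \R^q$, where Vershynin's result is directly available. The main point to be careful about is to avoid sneaking in any restricted-isometry or non-degeneracy hypothesis on $\bC$: the reduction works for any deterministic $\bC$ because the general $M^*$ bound is stated abstractly for an arbitrary bounded set in $\R^q$ and a standard Gaussian design, with no structural assumption on the set. Thus the extension from $\bA = \bB$ to $\bA = \bB\bC$ adds no real difficulty beyond bookkeeping.
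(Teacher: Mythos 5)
Your proof is correct, and it takes a genuinely different (and cleaner) route than the paper. The paper does not invoke Vershynin's Theorem 5.1 as a black box; instead it \emph{re-runs} the proof of that theorem with $\bC$ threaded through: it establishes the deviation inequality $\E \sup_{\bu \in T} \bigl|\frac{1}{n}\sum_{i=1}^n |\langle \ba_i,\bu \rangle| - \sqrt{2/\pi} \, (\bu^T\bC^T\bC\bu)^{1/2} \bigr| \leq \frac{2}{\sqrt{n}} \, \E \sup_{\bu \in T} |\langle \bC^T\bg,\bu\rangle|$ via rotation invariance of the rows $\ba_i = \bC^T \bb_i$, symmetrization, and the contraction principle, and then restricts to $T_\varepsilon$ and applies the triangle inequality. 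You instead observe that both the constraint $\frac{1}{n}\|\bA\bu\|_1 \leq \varepsilon$ and the objective $(\bu^T\bC^T\bC\bu)^{1/2} = \|\bC\bu\|_2$ factor through the pushforward $\bv = \bC\bu$, so the whole statement is the original $M^*$ bound applied to the bounded set $V = \bC T \subset \R^q$ with the standard Gaussian matrix $\bB$, followed by the adjoint identity $\langle \bg, \bC\bu\rangle = \langle \bC^T\bg,\bu\rangle$. All steps check out: $\bC T_\varepsilon \subseteq V_\varepsilon$ gives the needed inequality between suprema (in fact equality holds, but the inequality suffices), and no non-degeneracy assumption on $\bC$ is ever used. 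What the paper's approach buys is self-containedness (the full probabilistic argument appears on the page, with the provenance of the constants $\sqrt{2\pi/n}$ and $\sqrt{\pi/2}$ visible); what yours buys is brevity and the conceptual point that the extension to $\bA = \bB\bC$ requires no new probabilistic content at all, only a change of variables.
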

The proof follows in a straightforward fashion from the proof of Theorem 5.1 in \cite{vershynin14}, with modifications made to account for $\bC$.  For the sake of completeness we include a proof here.
\begin{proof}
The bound (\ref{mbound}) follows from the deviation inequality
\begin{eqnarray}
\E \sup_{\bu \in T} \left|\frac{1}{n}\sum_{i=1}^n |\langle \ba_i,\bu \rangle| - \sqrt{\frac{2}{\pi}} \left(\bu^T\bC^T\bC\bu\right)^{1/2} \right| \ \leq \ \frac{2}{\sqrt{n}} \, \E \sup_{\bu \in T} |\langle \bC^T\bg,\bu\rangle| \ ,
\label{bb}
\end{eqnarray}
where $\ba_i$ denotes the $i$th row of $\bA$.  To see this, note that the inequality holds if we replace $T$ by the smaller set $T_{\varepsilon}$. For $\bu \in T_{\varepsilon}$,  and for such $\bu$ we have by assumption that $\frac{1}{n}\sum_{i=1}^n |\langle \ba_i,\bu \rangle| = \frac{1}{n}\|\bA\bu\|_1 \leq \varepsilon$, and the bound (\ref{mbound}) follows by the triangle inequality.

To prove (\ref{bb}), the first thing to note is that
\begin{eqnarray}
\E |\langle \ba_i,\bu \rangle| & = & \E |\langle \bC^T\bb_i,\bu \rangle| \ = \ \E |\langle \bb_i,\bC\bu \rangle| \ ,
\end{eqnarray}
where $\bb_i$ is the $i$th row of $\bB$.  Because the Gaussian distribution of $\bb_i$ is rotationally invariant, it follows that
\begin{eqnarray}
\E |\langle \bb_i,\bC\bu \rangle|  \ = \ \sqrt{\frac{2}{\pi}} \left(\bu^T\bC^T\bC\bu\right)^{1/2} \ .
\end{eqnarray}
Using the symmetrization and contraction inequalities from Proposition 5.2 in \cite{vershynin14}, we have the bound
\begin{eqnarray}
\E \sup_{\bu \in T} \left|\frac{1}{n}\sum_{i=1}^n |\langle \ba_i,\bu \rangle| - \sqrt{\frac{2}{\pi}} \left(\bu^T\bC^T\bC\bu\right)^{1/2} \right| & \leq & 2 \, \E\sup_{\bu \in T} \left|\frac{1}{n} \sum_{i=1}^n \varepsilon_i \langle \bb_i,\bC \bu\rangle \right| \\
& = & 2 \, \E\sup_{\bu \in T} \left|\left\langle \frac{1}{n} \sum_{i=1}^n \varepsilon_i  \bb_i,\bC \bu\right\rangle \right| \ ,
\end{eqnarray}
where each $\varepsilon_i$ independently takes values $-1$ and $+1$ with probabilities $1/2$.
Note that  vector \mbox{$\bg := \frac{1}{\sqrt{n}} \sum_{i=1}^n \varepsilon_i  \bb_i \sim {\cal N}(0,{\bf I}_n)$}, thus,
\begin{eqnarray}
2 \, \E\sup_{\bu \in T} \left|\left\langle \frac{1}{n} \sum_{i=1}^n \varepsilon_i  \bb_i,\bC \bu\right\rangle \right| & = & \frac{2}{\sqrt{n}} \E \sup_{\bu\in T} |\langle \bg,\bC\bu\rangle| \ = \  \frac{2}{\sqrt{n}} \E \sup_{\bu\in T} |\langle \bC^T\bg,\bu\rangle| .
\end{eqnarray}
This completes the proof.

\end{proof}

\subsubsection{Estimation from Noisy Linear Observations}
Theorem~\ref{egm} can be used to derive error bounds for estimating signals known to belong to a certain subset (sparse sets are a special case we will consider in the next section).
Let $\K\subset \R^p$ be given.
Suppose that we observe
\begin{eqnarray}
\by \ = \ \bA \bx^\star + \bnu \ , \ \ \ \ \frac{1}{n}\|\bnu\|_1 \ \leq \ \varepsilon \ ,
\end{eqnarray}
where $\bx^\star \in \K$.
The following theorems are straightforward extensions of
Theorems 6.1 and 6.2 in \cite{vershynin14}.  We include the proofs for the sake of completeness.

\begin{theorem}
\mbox{\rm (Estimation from noisy linear observations: feasibility program).}  Choose $\widehat{\bx}$ to be any vector satisfying
\begin{eqnarray}
\widehat{\bx} \in \K \mbox{ \ and \ } \frac{1}{n}\|\bA\widehat{\bx}-\by\|_1 \leq \varepsilon \ .
\end{eqnarray}
Then
\begin{eqnarray}
\E \sup_{\bx^\star \in \K} \left\{(\widehat \bx - \bx^\star)^T\bC^T\bC(\widehat \bx - \bx^\star)\right\}^{1/2} & \leq & \sqrt{2\pi} \left( \frac{\E \sup_{\bu \in \K-\K} |\langle \bC^T\bg,\bu\rangle|}{\sqrt{n}} +\varepsilon\right) \ .
\end{eqnarray}
\label{fp}
\end{theorem}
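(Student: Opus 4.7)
The proof plan is to package the feasibility assumption into a constraint on the error vector $\bu := \widehat\bx - \bx^\star$ and then invoke the Extended $M^*$ Bound (Theorem~\ref{egm}). First, I would fix $\bx^\star \in \K$. Since $\widehat\bx \in \K$ by construction, the difference $\bu$ always lies in the Minkowski difference set $T := \K - \K$, which is bounded because $\K$ is bounded. Note that $\bu^T\bC^T\bC\bu = (\widehat\bx - \bx^\star)^T\bC^T\bC(\widehat\bx - \bx^\star)$, so the quantity we want to bound is exactly the one appearing in the $M^*$ bound.

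Second, I would verify that $\bu$ lies in the sub-level set $T_{2\varepsilon}$ of~\eqref{te}. Using $\by = \bA\bx^\star + \bnu$ with $\frac{1}{n}\|\bnu\|_1 \leq \varepsilon$ and the feasibility condition $\frac{1}{n}\|\bA\widehat\bx - \by\|_1 \leq \varepsilon$, the triangle inequality gives
\begin{equation*}
\frac{1}{n}\|\bA\bu\|_1 \;\leq\; \frac{1}{n}\|\bA\widehat\bx - \by\|_1 + \frac{1}{n}\|\by - \bA\bx^\star\|_1 \;\leq\; 2\varepsilon,
\end{equation*}
so $\bu \in T_{2\varepsilon}$.

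Third, I would apply Theorem~\ref{egm} with $T = \K - \K$ and $2\varepsilon$ in place of $\varepsilon$. Since the inclusion $\widehat\bx - \bx^\star \in T_{2\varepsilon}$ holds for every admissible $\bx^\star \in \K$ (the argument does not use any specific property of $\bx^\star$ beyond membership in $\K$), the supremum over $\bx^\star$ on the left-hand side of the claim is dominated by $\E \sup_{\bu \in T_{2\varepsilon}} (\bu^T\bC^T\bC\bu)^{1/2}$. Theorem~\ref{egm} then provides
\begin{equation*}
\sqrt{\frac{2\pi}{n}}\, \E \sup_{\bu \in \K - \K} |\langle \bC^T\bg, \bu\rangle| \;+\; \sqrt{\frac{\pi}{2}} \cdot 2\varepsilon,
\end{equation*}
and collecting the common factor $\sqrt{2\pi}$ (using $\sqrt{\pi/2}\cdot 2 = \sqrt{2\pi}$) gives exactly the claimed bound.

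There is no serious obstacle here; the argument is a direct packaging of the hypotheses into the form required by Theorem~\ref{egm}. The only subtle point is the exchange between the outer sup over $\bx^\star \in \K$ and the sup over $\bu \in T_{2\varepsilon}$ inside the $M^*$ bound; this is reconciled by observing that the set of realized error vectors $\{\widehat\bx - \bx^\star : \bx^\star \in \K\}$ sits inside $T_{2\varepsilon}$, so no measurable selection of $\widehat\bx$ as a function of $\bx^\star$ is needed.
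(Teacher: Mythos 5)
Your argument is correct and follows exactly the same route as the paper's own proof: show via the triangle inequality that $\widehat\bx-\bx^\star\in T_{2\varepsilon}$ for $T=\K-\K$, then invoke Theorem~\ref{egm} with $2\varepsilon$ in place of $\varepsilon$ and simplify $\sqrt{\pi/2}\cdot 2\varepsilon=\sqrt{2\pi}\,\varepsilon$. No gaps; the remark about the supremum over $\bx^\star$ being absorbed into the supremum over $T_{2\varepsilon}$ is a nice clarification of a point the paper leaves implicit.
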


\begin{proof}
We apply Theorem~\ref{egm} to the set $T=\K-\K$ with $2\varepsilon$ instead of $\varepsilon$, which yields
\begin{eqnarray*}
\E \sup_{\bu \in T_{2\varepsilon}} \left(\bu^T\bC^T\bC\bu\right)^{1/2} & \leq & \sqrt{\frac{2\pi}{n}} \, \E \sup_{\bu \in T} |\langle \bC^T\bg,\bu\rangle| \ + \ \sqrt{2\pi} \varepsilon \ ,
\end{eqnarray*}
From here, all we need to show is that for any $\bx^\star \in \K$
\begin{eqnarray}
\widehat\bx -\bx^\star \in T_{2\varepsilon} \ .
\end{eqnarray}
To see this, note that $\widehat\bx,\bx^\star \in \K$, so $\widehat\bx-\bx^\star \in \K-\K = T$.  By the triangle inequality,
$$\frac{1}{n}\|\bA(\widehat\bx-\bx^\star)\|_1 = \frac{1}{n}\|\bA\widehat\bx - \by+\bnu\|_1 \leq \frac{1}{n}\|\bA\widehat\bx -\by\|_1 + \frac{1}{n}\|\bnu\|_1 \leq 2\varepsilon \ ,$$
showing that $\bu = \widehat\bx-\bx^\star$ indeed satisfies the constraints that define $T_{2\varepsilon}$ in (\ref{te}).
\end{proof}

Next we derive an optimization program for the solution. The {\em Minkowski functional} of $\K$ is defined as
$$\|\bx\|_{\K} \ = \ \inf\{\lambda >0 \, : \, \lambda^{-1}\bx\in \K\} \ . $$
If $\K$ is a compact and origin-symmetric convex set with non-empty interior, then $\|\bx\|_{\K}$ is a norm on $\R^p$ \cite{rock}.  Note that $\bx\in \K$ if and only if $\|\bx\|_{\K} \leq 1$.

\begin{theorem}
\mbox{ \rm (Estimation from noisy linear observations: optimization program).}  Choose $\widehat{\bx}$ to be a solution to the the optimization
\begin{eqnarray}
\min\|\bx\|_{\K} \mbox{ \ subject to \ } \frac{1}{n}\|\bA\bx -\by\|_1 \leq \varepsilon \ .
\end{eqnarray}
Then
\begin{eqnarray}
\E \sup_{\bx^\star \in \K} \left\{(\widehat \bx - \bx^\star)^T\bC^T\bC(\widehat \bx - \bx^\star)\right\}^{1/2} & \leq & \sqrt{2\pi} \left( \frac{\E \sup_{\bu \in \K-\K} |\langle \bC^T\bg,\bu\rangle|}{\sqrt{n}} +\varepsilon\right) \ .
\label{width}
\end{eqnarray}
\label{op}
\end{theorem}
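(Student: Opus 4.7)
The plan is to reduce Theorem~\ref{op} to Theorem~\ref{fp} by showing that the optimizer $\widehat{\bx}$ of the Minkowski-functional program automatically lies in $\K$, so it qualifies as an admissible choice in the feasibility program. Once that membership is established, the bound is immediate from the previous theorem.

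First I would verify that $\bx^\star$ itself is feasible for the optimization. Since $\by = \bA\bx^\star + \bnu$, we have $\frac{1}{n}\|\bA\bx^\star - \by\|_1 = \frac{1}{n}\|\bnu\|_1 \leq \varepsilon$, so $\bx^\star$ satisfies the constraint. Moreover, $\bx^\star \in \K$ means, by the definition of the Minkowski functional and the fact that $\K$ is origin-symmetric and convex with nonempty interior, that $\|\bx^\star\|_{\K} \leq 1$.

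Next, by optimality of $\widehat{\bx}$, we have $\|\widehat{\bx}\|_{\K} \leq \|\bx^\star\|_{\K} \leq 1$, which gives $\widehat{\bx} \in \K$ (using the equivalence $\bx \in \K \iff \|\bx\|_{\K} \leq 1$ noted just before the theorem). Combined with the constraint $\frac{1}{n}\|\bA\widehat{\bx} - \by\|_1 \leq \varepsilon$, this shows that $\widehat{\bx}$ meets both conditions of the feasibility program in Theorem~\ref{fp}. Invoking Theorem~\ref{fp} directly yields the stated bound \eqref{width}.

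There is no real obstacle here; the entire argument hinges on the observation that the constraint set of the feasibility program contains $\bx^\star$, and that the Minkowski functional $\|\cdot\|_{\K}$ is minimized over that set at a point still inside $\K$. The only mild subtlety worth flagging is ensuring that $\|\cdot\|_{\K}$ is indeed a bona fide norm (so that "minimum" is well-defined and $\|\widehat{\bx}\|_{\K} \leq \|\bx^\star\|_{\K}$ implies $\widehat{\bx} \in \K$), which is guaranteed by the stated hypotheses on $\K$ (compact, origin-symmetric, convex, nonempty interior).
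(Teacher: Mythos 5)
Your proof is correct and follows essentially the same route as the paper's: you show $\bx^\star$ is feasible, use optimality to conclude $\|\widehat{\bx}\|_{\K}\leq\|\bx^\star\|_{\K}\leq 1$ so that $\widehat{\bx}\in\K$, and then invoke Theorem~\ref{fp}. Nothing is missing.
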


\begin{proof}
If we show that $\widehat \bx \in \K$, then the result follows from Theorem~\ref{fp}. Note that the constraint of the program guarantees that $\frac{1}{n} \|\bA\widehat\bx-\by\|_1 \leq \varepsilon$ and by assumption we have $\frac{1}{n}\|\bA\bx^\star-\by\|_1 = \frac{1}{n}\|\bnu\|_1 \leq \varepsilon$.  Thus we have
$$\|\widehat \bx\|_{\K} \ \leq \ \|\bx^\star\|_{\K} \ \leq \  1 \ , $$
since $\bx^\star\in \K$.  The inequality $\|\widehat \bx\|_{\K} \leq 1$ implies that $\widehat\bx \in \K$.
\end{proof}

\subsubsection{Sparse Recovery via OWL}

Recall the definition of the OWL norm
$$\Omega_{\bm w}(\bx) \ = \ \sum_{i=1}^p w_i |x|_{[i]} \ , $$
where $|x|_{[1]},\dots,|x|_{[p]}$  are the magnitudes of the elements of $\bx$ in decreasing order and $w_1\geq w_2\geq \cdots\geq w_p$ is a non-increasing sequence of weights.  The OWL norm satisfies
$$\bar w \, \|\bx\|_1 \ \leq \Omega_{\bm w}(\bx) \ \leq \ w_1\|\bx\|_1 \ , $$
where $\bar w := \frac{1}{N}\sum_{i=1}^N w_i$.  This is easily verified
by minimizing or maximizing the OWL norm subject to a fixed $\ell_1$ norm.
We now prove Theorem~\ref{thm1}.

\begin{proof}(Theorem~\ref{thm1})
Since the signal generating the measurements is assumed to satisfy $\|\bx^\star\|_1\leq \sqrt{s}$, we first need to construct an OWL ball that contains all $\bx \in \R^p$ with $\|\bx\|_1\leq \sqrt{s}$.
Let $\K = \{\bx\in \R^p \, : \, \Omega_{\bm w}(\bx) \leq w_1 \,\sqrt{s} \}$.  Because $\Omega_{\bm w}(\bx) \leq w_1 \|\bx\|_1$, all vectors satisfying $\|\bx\|_1 \leq \sqrt{s}$ belong to $\K$.
Also note that because $\Omega_{\bm w}(\bx)$ is a norm, and $\K$ is a ball of this norm, the Minkowski functional  $\|\bx\|_{\K}$ is proportional to $\Omega_{\bm w}(\bx)$.

The quantity $\E \sup_{\bu \in \K-\K} |\langle \bC^T\bg,\bu\rangle|$ in (\ref{width}), called the width of $\K$, satisfies
$$\E \sup_{\bu \in \K-\K} |\langle \bC^T\bg,\bu\rangle| \ = \ \E \sup_{\bu \in \K-\K} |\langle \bg,\bC\bu\rangle| \ . $$ Note that $$\|\bC\bu\|_1 \ \leq \ \|\bC\|_1 \|\bu\|_1 \ \leq \ \|\bC\|_1 \, \frac{1}{\bar w} \, \Omega_{\bm w}(\bu)  \ . $$
The triangle inequality and the definition of $\K$ imply that for any $\bu \in \K-\K$, $\Omega_{\bm w}(\bu)   \leq 2 w_1\sqrt{s}$, thus we have
$$\|\bC\bu\|_1 \ \leq \ 2\, \|\bC\|_1 \, \frac{w_1}{\bar w} \, \sqrt{s} \ . $$
The width can be then bounded as
\begin{eqnarray*}
\E\sup_{\bu \in \K-\K}|\langle \bg,\bC\bu\rangle| & \leq &  \E\sup_{\{\bv \, : \,  \|\bv\|_1 \leq 2\, \|\bC\|_1 \, \frac{w_1}{\bar w} \, \sqrt{s} \}} |\langle \bg,\bv\rangle|  \ .
\end{eqnarray*}
The $\bv$ that maximizes the right hand side places mass $2\, \|\bC\|_1 \, \frac{w_1}{\bar w} \, \sqrt{s}$ on the largest element of $\bg$ (in magnitude) and zero on every other element. This yields the bound
\begin{eqnarray}
\E\sup_{\bu \in \K-\K}|\langle \bg,\bC\bu\rangle|
& \leq & 2\, \|\bC\|_1 \, \frac{w_1}{\bar w} \, \sqrt{s} \ \E\max_{i=1,\dots,q}|g_i| \ .
\label{b}
\end{eqnarray}
Using Jensen's inequality, the square of the expectation in (\ref{b}) can be bounded as
\begin{eqnarray*}
\left(\E\max_{i=1,\dots,q}|g_i|\right)^2 & \leq & \E \max_{i=1,\dots,q}|g_i|^2 \ \leq
\ \left(\sqrt{2\log q}+1\right)^2 \ ,
\end{eqnarray*}
where the second inequality comes from a chi-square tail bound
(see Lemma 3.2 in \cite{rao12}).  Note that since $q>1$,  $\sqrt{2\log q}+1 < 2\sqrt{2\log q}$.  Putting everything together, we obtain the bound
\begin{eqnarray*}
\E\sup_{\bu \in \K-\K}|\langle \bg,\bC\bu\rangle| & \leq &  4\sqrt{2} \, \|\bC\|_1 \, \frac{w_1}{\bar w} \, \sqrt{s\log q}
\end{eqnarray*}
Theorem~\ref{thm1} now follows immediately from Theorem~\ref{op}, above.
\end{proof}

\section{Conclusion}
In this paper, we have studied {\it ordered weighted} $\ell_1$ (OWL) regularization for sparse estimation problems with strongly correlated variables.  We have proved sufficient conditions under which the OWL regularizer clusters the coefficient estimates, based on the correlation/colinearity of the variables in the design matrix. We have also characterized the statistical performance of OWL regularization for generative models in which certain clusters of regression variables are strongly (even perfectly) correlated, but variables in different clusters are uncorrelated.  Essentially, we showed that, by using OWL regularization, we pay no price (in terms of the number of measurements) for the presence of strongly correlated variables. Future work will include the experimental evaluation of OWL regularization and its application to other problems, such as logistic regression.

\end{document}